\definecolor{mydarkblue}{rgb}{0,0.08,0.65}
\title{Neural Manifold Ordinary Differential Equations}
\author{
    Aaron Lou*, Derek Lim*, Isay Katsman*, Leo Huang*, Qingxuan Jiang\\
    Cornell University\\
    \texttt{\{al968, dl772, isk22, ah839, qj46\}@cornell.edu}\\
    \And
    Ser-Nam Lim\\
    Facebook AI\\
    \texttt{sernam@gmail.com}\\
    \And
    Christopher De Sa\\
    Cornell University\\
    \texttt{cdesa@cs.cornell.edu}
}
\newtheorem{thm}{Theorem}[section]
\newtheorem{prop}[thm]{Prop}
\newtheorem*{remark}{Remark}
\newtheorem*{lemma}{Lemma}
\newcommand{\mc}{\mathcal}
\newcommand{\mrm}{\mathrm}
\DeclareMathOperator{\arccosh}{arccosh}
\newcommand{\inn}[1]{\left\langle#1\right\rangle}
\newcommand{\R}{\mathbb{R}}
\newcommand{\N}{\mathcal{N}}
\newcommand{\paren}[1]{\left(#1\right)}
\newcommand{\sqbrac}[1]{\left[#1\right]}
\renewcommand{\abs}[1]{\left|#1\right|}
\renewcommand{\norm}[1]{\left\|#1\right\|}
\newcommand{\parderiv}[2]{\frac{\partial #1}{\partial #2}}
\newcommand{\M}{\mathcal{M}}
\newcommand{\mbf}{\mathbf}
\renewcommand{\mrm}{\mathrm}
\newcommand{\RR}{\mathbb{R}}
\newcommand{\tabcent}{\multicolumn{1}{c}}
\begin{document}

\maketitle

\begin{abstract}
    To better conform to data geometry, recent deep generative modelling techniques adapt Euclidean constructions to non-Euclidean spaces. In this paper, we study normalizing flows on manifolds. Previous work has developed flow models for specific cases; however, these advancements hand craft layers on a manifold-by-manifold basis, restricting generality and inducing cumbersome design constraints. We overcome these issues by introducing Neural Manifold Ordinary Differential Equations, a manifold generalization of Neural ODEs, which enables the construction of Manifold Continuous Normalizing Flows (MCNFs). MCNFs require only local geometry (therefore generalizing to arbitrary manifolds) and compute probabilities with continuous change of variables (allowing for a simple and expressive flow construction). We find that leveraging continuous manifold dynamics produces a marked improvement for both density estimation and downstream tasks.
\end{abstract}

\section{Introduction}\label{sec:intro}

\begin{wrapfigure}{R}{.35\textwidth}
  \vspace{-5pt}
  \includegraphics[width=.35\textwidth]{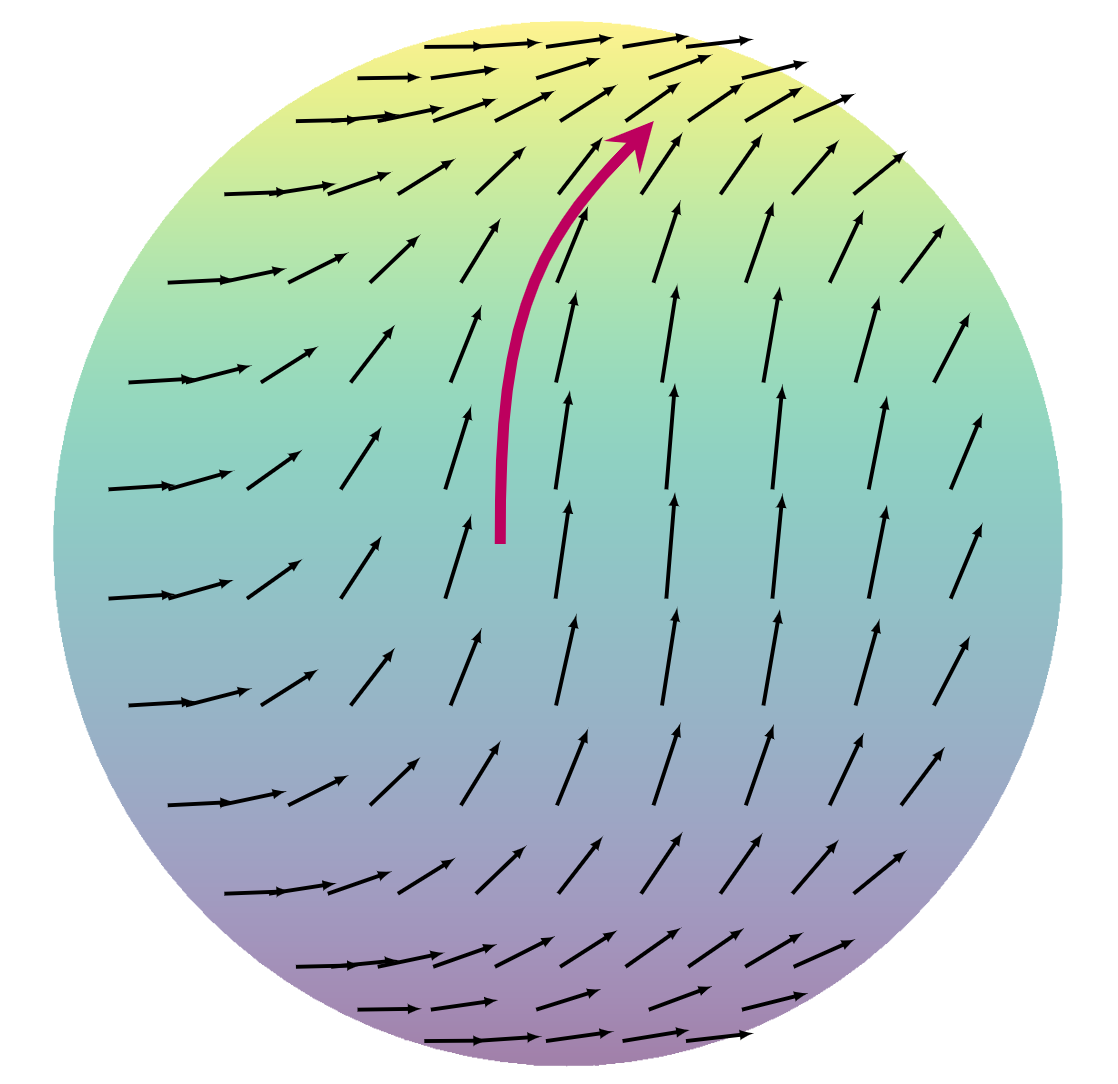}
  \caption{A manifold ODE solution for a given vector field on the sphere.}
  \label{fig:mode}
  \vspace{-5pt}
\end{wrapfigure}
\let\thefootnote\relax\footnotetext{* indicates equal contribution}
Deep generative models are a powerful class of neural networks which fit a probability distribution to produce new, unique samples. While latent variable models such as Generative Adversarial Networks (GANs) \cite{Goodfellow2014GenerativeAN} and Variational Autoencoders (VAEs) \cite{Kingma2014AutoEncodingVB} are capable of producing reasonable samples, computing the exact modeled data posterior is fundamentally intractable. By comparison, normalizing flows \cite{Rezende2015VariationalIW} are capable of learning rich and tractable posteriors by transforming a simple probability distribution through a sequence of invertible mappings. Formally, in a normalizing flow, a complex distribution $p(x)$ is transformed to a simple distribution $\pi(z)$ via a diffeomorphism $f$ (i.e. a differentiable bijective map with a differentiable inverse) with probability values given by the change of variables:
\begin{equation*}\label{eqn:logProbUpdate}
    \log p(x) = \log \pi(z) - \log \det \abs{\parderiv{f^{-1}}{z}}, \qquad z = f(x).
\end{equation*}
To compute this update efficiently, $f$ must be constrained to allow for fast evaluation of the determinant, which in the absence of additional constraints takes $\mathcal{O}(D^3)$ time (where $D$ is the dimension of $z$). Furthermore, to efficiently generate samples, $f$ must have a computationally cheap inverse. Existing literature increases the expressiveness of such models under these computational constraints and oftentimes parameterizes $f$ with deep neural networks \cite{Germain2015MADEMA, Dinh2017DensityEU, Kingma2018GlowGF, Durkan2019NeuralSF}. An important recent advancement, dubbed the Continuous Normalizing Flow (CNF), constructs $f$ using a Neural Ordinary Differential Equation (ODE) with dynamics $g$ and invokes a continuous change of variables which requires only the trace of the Jacobian of $g$ \cite{Chen2018NeuralOD, Grathwohl2019FFJORDFC}.

Since $f$ is a diffeomorphism, the topologies of the distributions $p$ and $\pi$ must be equivalent. Furthermore, this topology must conform with the underlying latent space, which previous work mostly assumes to be Euclidean. However, topologically nontrivial data often arise in real world examples such as in quantum field theory \cite{Wirnsberger2020TargetedFE}, motion estimation \cite{Feiten2013RigidME}, and protein structure prediction~\cite{Hamelryck2006SamplingRP}.

In order to go beyond topologically trivial Euclidean space, one can model the latent space with a \textit{smooth manifold}. An $n$-dimensional manifold\footnote{All of our manifolds are assumed to be smooth, so we refer to them simply as manifolds.} $\M$ can be thought of as an $n$-dimensional analogue of a surface. Concretely, this is formalized with \textit{charts}, which are smooth bijections $\varphi_x : U_x \to V_x$, where $U_x \subseteq \R^n, x \in V_x \subseteq \M$, that also satisfy a smoothness condition when passing between charts. For charts $\varphi_{x_1}$, $\varphi_{x_2}$ with corresponding $U_{x_1}, V_{x_1}, U_{x_2}, V_{x_2}$, the composed map $\varphi_{x_2}^{-1} \circ \varphi_{x_1} : \varphi_{x_1}^{-1}(V_{x_1} \cap V_{x_2}) \to \varphi_{x_2}^{-1}(V_{x_1} \cap V_{x_2})$ is a diffeomorphism. 


Preexisting manifold normalizing flow works (which we present a complete history of in Section \ref{sec:related}) do not generalize to arbitrary manifolds. Furthermore, many examples present constructions extrinsic to the manifold. In this work, we solve these issues by introducing Manifold Continuous Normalizing Flows (MCNFs), a manifold analogue of Continuous Normalizing Flows. Concretely, we:

\begin{enumerate}[label=(\roman*)]
    \item introduce Neural Manifold ODEs as a generalization of Neural ODEs (seen in Figure \ref{fig:mode}). We leverage existing literature to provide methods for forward mode integration, and we derive a manifold analogue of the adjoint method \cite{pontryagin1962mathematical} for backward mode gradient computation.

    \item develop a dynamic chart method to realize Neural Manifold ODEs in practice. This approach integrates local dynamics in Euclidean space and passes between domains using smooth chart transitions. Because of this, we perform computations efficiently and can accurately compute gradients. Additionally, this allows us to access advanced ODE solvers (without manifold equivalents) and augment the Neural Manifold ODE with existing Neural ODE improvements \cite{Dupont2019AugmentedNO, Grathwohl2019FFJORDFC}.

    \item construct Manifold Continuous Normalizing Flows. These flows are constructed by integrating local dynamics to construct diffeomorphisms, meaning that they are theoretically complete over general manifolds. Empirically, we find that our method outperforms existing manifold normalizing flows on their specific domain.
\end{enumerate}

\section{Related Work}\label{sec:related}

In this section we analyze all major preexisting manifold normalizing flows. Previous methods are, in general, hampered by a lack of generality and are burdensomely constructive.

\textbf{Normalizing Flows on Riemannian Manifolds \cite{Gemici2016NormalizingFO}.} The first manifold normalizing flow work constructs examples on Riemannian manifolds by first projecting onto Euclidean space, applying a predefined Euclidean normalizing flow, and projecting back. Although simple, this construction is theoretically flawed since the initial manifold projection requires the manifold to be diffeomorphic to Euclidean space. This is not always the case, since, for example, the existence of antipodal points on a sphere necessarily implies that the sphere is not diffeomorphic to Euclidean space\footnote{This example is generalized by the notion of conjugate points. Most manifolds have conjugate points and those without are topologically equivalent to Euclidean space.}. As a result, the construction only works on a relatively small and topologically trivial subset of manifolds.

Our work overcomes this problem by integrating local dynamics to construct a global diffeomorphism. By doing so, we do not have to relate our entire manifold with some Euclidean space, but rather only well-behaved local neighborhoods. We test against \cite{Gemici2016NormalizingFO} on hyperbolic space, and our results produce a significant improvement.

\textbf{Latent Variable Modeling with Hyperbolic Normalizing Flows \cite{Bose2020LatentVM}.} In a recent manifold normalizing flow paper, the authors propose two normalizing flows on hyperbolic space---a specific Riemannian manifold. These models, which they name the Tangent Coupling (TC) and Wrapped Hyperboloid Coupling (WHC), are not affected by the aforementioned problem since hyperbolic space is diffeomorphic to Euclidean space. However, various shortcomings exist. First, in our experiments we find that the methods do not seem to conclusively outperform \cite{Gemici2016NormalizingFO}. Second, these methods do not generalize to topologically nontrivial manifolds. This means that these flows produce no additional topological complexity and thus the main benefit of manifold normalizing flows is not realized. Third, the WHC construction is not intrinsic to hyperbolic space since it relies on the hyperboloid equation.

Our method, by contrast, is derived from vector fields, which are natural manifold constructs. This not only allows for generality, but also means that our construction respects the manifold geometry. We compare against \cite{Bose2020LatentVM} on hyperbolic space and find that our results produce a substantial improvement.

\textbf{Normalizing Flows on Tori and Spheres \cite{Rezende2020NormalizingFO}.} In another paper, the authors introduce a variety of normalizing flows for tori and spheres. These manifolds are not diffeomorphic to Euclidean space, hence the authors construct explicit global diffeomorphisms. However, their constructions do not generalize and must be intricately designed with manifold topology in mind. In addition, the primary recursive $\mathbb{S}^n$ flow makes use of non-global diffeomorphisms to the cylinder, so densities are not defined everywhere. The secondary exponential map-based $\mathbb{S}^n$ flow is globally defined but is not computationally tractable for higher dimensions.

In comparison, our work is general, requires only local diffeomorphisms, produces globally defined densities, and is tractable for higher dimensions. We test against \cite{Rezende2020NormalizingFO} on the sphere and attain markedly better results.

\textbf{Other Related Work.} In \cite{Falorsi2019ReparameterizingDO}, the authors define a probability distribution on Lie Groups, a special type of manifold, and as a by-product construct a normalizing flow. The constructed flow is very similar to that found in \cite{Gemici2016NormalizingFO}, but the authors include a $\tanh$ nonlinearity at the end of the Euclidean flow to constrain the input space and guarantee that the map back to the manifold is injective. We do not compare directly against this work since Lie Groups are not general (the $2$-dimensional sphere is not a Lie Group \cite{Stillwell2008NaiveLT}) while Riemannian manifolds are.

There are some related works such as \cite{Wirnsberger2020TargetedFE, Wang2019RiemannianNF, Brehmer2020FlowsFS} that are orthogonal to our considerations as they either (i) develop applications as opposed to theory or (ii) utilize normalizing flows as a tool to study Riemannian metrics.

Concurrent work \cite{mathieu2020Riemannian,Falorsi2020NeuralOD} also investigates the extension of neural ODEs to smoooth manifolds.





\section{Background}

In this section, we present background knowledge to establish naming conventions and intuitively illustrate the constructions used for our work. For a more detailed overview, we recommend consulting a text such as \cite{lee1997riemannian, lee2003introduction, do1992riemannian}.

\subsection{Differential Geometry}



\textbf{Tangent space.} For an $n$-dimensional manifold $\M$, the \textit{tangent space} $T_x\M$ at a point $x \in \M$ is a higher-dimensional analogue of a tangent plane at a point on a surface. It is an $n$-dimensional real vector space for all points $x \in \M$.

For our purposes, tangent spaces will play two roles. First, they provide a notion of derivation which is crucial in defining manifold ODEs. Second, they will oftentimes be used in place of $\R^n$ for our charts (as we map $U_x$ to some open subset of $T_x\M$ through a change of basis from $\R^n \to T_x\M$).

\textbf{Pushforward/Differential.} A derivative (or a \textit{pushforward}) of a function $f : \M \to \N$ between two manifolds is denoted by $D_xf : T_x\M \to T_x\N$. This is a generalization of the classical Euclidean Jacobian (as $\R^n$ is a manifold), and provides a way to relate tangent spaces at different points.

As one might expect, the pushforward is central in the definition of manifold ODEs (analogous to the importance of the common derivative in Euclidean ODEs). We also use it in our dynamic chart method to map tangent vectors of the manifold to tangent vectors of Euclidean space.

\subsection{Riemannian Geometry}

While the above theory is general, to concretize some computational aspects (e.g. how to pick charts) and give background on related manifold normalizing flow work, we define relevant concepts from Riemannian geometry.

\textbf{Riemannian Manifold.} The fundamental object of study in Riemannian geometry is the \textit{Riemannian manifold}. This is a manifold with an additional structure called the \textit{Riemannian metric}, which is a smooth metric $\rho_x : T_x\M \times T_x\M \to \R$ (often denoted as $\inn{\cdot, \cdot}_\rho$). This Riemannian metric allows us to construct a distance on the manifold $d_{\rho} : \M \times \M \to \RR$. Furthermore, any manifold can be given a Riemannian metric.

\textbf{Exponential Map.} The \textit{exponential map} $\exp_x : T_x\M \to \M$ can be thought of as taking a vector $v \in T_x\M$ and following the general direction (on the manifold) such that the distance traveled is the length of the tangent vector. Specifically, the distance on the manifold matches the induced tangent space norm $d_\rho(x, \exp_x(v)) = \norm{v}_\rho := \sqrt{\inn{v, v}_\rho}$. Note that $\exp_x(0) = x$.

The exponential map is crucial in our construction since it acts as a chart. Specifically, if we identify the chart domain with $T_x\M$ then $\exp_x$ is a diffeomorphism when restricted to some local set around~$0$.

\textbf{Special Cases.} Some special cases of Riemannian manifolds include hyperbolic spaces $\mathbb{H}^n = \{x \in \mathbb{R}^{n+1} : -x_1^2 + \sum_{i=2}^{n+1} x_i^2 = -1, \; x_1 > 0\}$, spheres ${\mathbb{S}^n = \{x \in \mathbb{R}^{n+1} : \sum^{n+1}_{i=1} x_i^2 = 1\}}$, and tori $\mathbb{T}^n = (\mathbb{S}^1)^n$. Specific formulas for Riemannian computations are given in Appendix~\ref{appendix:normalizingFlows}. Hyperbolic space is diffeomorphic to Euclidean space, but spheres and tori are not.

\subsection{Manifold Ordinary Differential Equations}

\textbf{Manifold ODE.} Finally, we introduce the key objects of study: manifold ordinary differential equations. A manifold ODE is an equation which relates a curve $\mbf{z} : [t_s, t_e] \to \M$ to a vector field $f$ and takes the form
\begin{equation}\label{eqn:manifoldOde}
    \frac{d\mbf{z}(t)}{dt} = f(\mbf{z}(t), t) \in T_{\mbf{z}(t)} \M \qquad \mbf{z}(t_s) = z_s
\end{equation}
$\mbf{z}$ is a \textit{solution} to the ODE if it satisfies Equation \ref{eqn:manifoldOde} with initial condition $z_s$. Similarly to the case of classical ODEs, local solutions are guaranteed to exist under sufficient conditions on $f$ \cite{Hairer2011manifold}.

\section{Neural Manifold Ordinary Differential Equations}

To leverage manifold ODEs in a deep learning framework similar to Neural ODEs \cite{Chen2018NeuralOD}, we parameterize the dynamics $f$ by a neural network with parameters $\theta$. We define both forward pass and backward pass gradient computation strategies in this framework. Unlike Neural ODEs, forward and backward computations do not perfectly mirror each other since forward mode computation requires explicit manifold methods, while the backward can be defined solely through an ODE in Euclidean space.

\subsection{Forward Mode Integration}\label{sec:forward}

The first step in defining our Neural Manifold ODE block is the forward mode integration. We review and select appropriate solvers from the literature; for a more thorough introduction we recommend consulting a text such as \cite{Crouch1993NumericalMODE, Hairer2011manifold}. Broadly speaking, these solvers can be classified into two groups:

\begin{enumerate}[label=(\roman*)]
    \item \textit{projection methods} that embed the manifold into Euclidean space $\R^d$, integrate with some base Euclidean solver, and project to the manifold after each step. Projection methods require additional manifold structure; in particular, $\M$ must be the level set of some smooth function $g: \R^d \to \R$.

    \item \textit{implicit methods} that solve the manifold ODE locally using charts. These methods only require manifold-implicit constructions.
\end{enumerate}

Projection methods are conceptually simple, but ultimately suffer from generality issues. In particular, for manifolds such as the open ball or the upper half-space, there is no  principled way to project off-manifold points back on. Furthermore, even in nominally well-defined cases such as the sphere, there may still exist points such as the origin for which the projection is not well-defined. 

Implicit methods, by contrast, can be applied to any manifold and do not require a level set representation. Thus, this approach is more amenable to our generality concerns (especially since we wish to work with, for example, hyperbolic space). However, difficulty in defining charts restricts applicability. Due to this reason, implicit schemes often employ additional structure to define manifold variations of step-based solvers \cite{BieleckiRiemannianNormal, Crouch1993NumericalMODE}. For example, on a Riemannian manifold one can define a variant of an Euler Method solver with update step $z_{t + \epsilon} = \exp_{z_t}(\epsilon f(z_t, t))$ using the Riemannian exponential map. On a Lie group, there are more advanced Runge-Kutta solvers that use the Lie Exponential map and coordinate frames \cite{Crouch1993NumericalMODE, Hairer2011manifold}.

\subsection{Backward Mode Adjoint Gradient Computation}\label{sec:backward}

In order to fully incorporate manifold ODEs into the deep learning framework, we must also efficiently compute gradients. Similar to \cite{Chen2018NeuralOD}, we develop an adjoint method to analytically calculate the derivative of a manifold ODE instead of directly differentiating through the solver. Similar to manifold adjoint methods for partial differential equations, we use an ambient space \cite{Zahr2016AFD}.

\begin{thm}\label{thm:manifoldAdjoint}
    Suppose we have some manifold ODE as given in Equation \ref{eqn:manifoldOde} and we define some loss function $L: \M \to \R$. Suppose that there is an embedding of $\M$ in some Euclidean space $\R^d$ and we identify $T_x\M$ with an $n$-dimensional subspace of $\R^d$. If we define the adjoint state to be $\mbf{a}(t) = D_{\mbf{z}(t)} L$, then the adjoint satisfies
    \begin{equation}\label{eqn:manifoldAdjoint}
        \frac{d\mbf{a}(t)}{dt} = - \mbf{a}(t) D_{\mbf{z}(t)}f(\mbf{z}(t), t)
    \end{equation}
\end{thm}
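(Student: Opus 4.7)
The plan is to reduce the manifold adjoint statement to the classical Euclidean adjoint result proved in Chen et al.~(2018) by leveraging the embedding $\M \hookrightarrow \R^d$ that the theorem already assumes. The main idea is: do not try to intrinsically differentiate a curve of covectors living in varying cotangent spaces; instead, extend $L$ and $f$ to a neighborhood of $\M$ in the ambient $\R^d$, apply the Euclidean adjoint identity there, and then restrict back to the tangent subspaces.

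Concretely, I would first use a tubular neighborhood of $\M$ (together with a smooth partition of unity) to produce smooth extensions $\tilde{L} : \R^d \to \R$ with $\tilde{L}|_{\M} = L$ and $\tilde{f} : \R^d \times \R \to \R^d$ with $\tilde{f}|_{\M} = f$. Because $f(x,t) \in T_x\M$ for every $x \in \M$, the vector field $\tilde{f}$ is tangent to $\M$ along $\M$, so by uniqueness of ODE solutions the trajectory of $\dot{\mbf{z}} = \tilde{f}(\mbf{z},t)$ starting at $z_s \in \M$ stays on $\M$ for all $t$ and coincides with the solution $\mbf{z}(t)$ of the manifold ODE in Equation~\ref{eqn:manifoldOde}. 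This step is where the embedding hypothesis does real work and what distinguishes the argument from a purely intrinsic one.

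Next I would invoke the standard Euclidean adjoint theorem applied to the pair $(\tilde{L}, \tilde{f})$ along this trajectory, obtaining
\begin{equation*}
    \frac{d\tilde{\mbf{a}}(t)}{dt} = -\tilde{\mbf{a}}(t)\, D\tilde{f}(\mbf{z}(t), t), \qquad \tilde{\mbf{a}}(t) := D_{\mbf{z}(t)}\tilde{L},
\end{equation*}
viewed as a row vector in $\R^{1\times d}$. The final step is to restrict to the identified subspace $T_{\mbf{z}(t)}\M \subseteq \R^d$. Since $\tilde{L}|_{\M} = L$, the restriction of $\tilde{\mbf{a}}(t)$ to $T_{\mbf{z}(t)}\M$ equals the intrinsic covector $\mbf{a}(t) = D_{\mbf{z}(t)}L$ appearing in the theorem. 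Since $\tilde{f}$ is tangent to $\M$ along $\M$, its ambient Jacobian $D\tilde{f}$ applied to a tangent direction $v \in T_{\mbf{z}(t)}\M$ computes a derivative of $f$ along $\M$, so the restriction of $D\tilde{f}(\mbf{z}(t),t)$ to $T_{\mbf{z}(t)}\M$ agrees with $D_{\mbf{z}(t)}f(\mbf{z}(t),t)$. Restricting both sides of the ambient adjoint equation to the tangent subspace then yields exactly Equation~\ref{eqn:manifoldAdjoint}.

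The main obstacle, and the step I would be most careful to verify, is the independence from the extensions. The ambient gradient $\tilde{\mbf{a}}(t)$ decomposes into tangential and normal parts, and the theorem only asserts the evolution of the tangential part. What makes the restriction well-defined is precisely that $\tilde{f}$ preserves $\M$, which forces $D\tilde{f}$ to map tangent directions to tangent directions along the trajectory; as a consequence, the tangential component of $\tilde{\mbf{a}}(t)\,D\tilde{f}$ is a function of the tangential component of $\tilde{\mbf{a}}(t)$ alone, so the ODE closes on $\mbf{a}(t)$ and no information from the particular choice of $\tilde{L}$ or $\tilde{f}$ off of $\M$ leaks in. Checking this closure rigorously is the only nontrivial piece; everything else is bookkeeping on top of the known Euclidean adjoint identity.
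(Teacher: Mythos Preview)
Your extend-and-restrict strategy is a reasonable route, but it is not the one the paper takes: the paper simply repeats Chen et al.'s computation verbatim in the ambient space, writing the first-order expansion $\mbf{z}(t+\epsilon)=\mbf{z}(t)+\epsilon f(\mbf{z}(t),t)+\mathcal{O}(\epsilon^2)$ in $\R^d$, applying the chain rule $\mbf{a}(t)=\mbf{a}(t+\epsilon)\,D_{\mbf{z}(t)}T_\epsilon$, and passing to the limit. No tubular neighborhood, no explicit extension, no closure argument.

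More importantly, your closure step contains a genuine error. You assert that because $\tilde{f}$ is tangent to $\M$ along $\M$, the ambient Jacobian $D\tilde{f}$ maps tangent directions to tangent directions. This is false. Tangency of a vector field to a submanifold implies that its \emph{flow} preserves $\M$, hence $D\tilde{\Phi}$ carries $T_{\mbf{z}(t)}\M$ into $T_{\mbf{z}(t')}\M$; it does \emph{not} imply that the pointwise Jacobian of the field itself preserves the tangent subspace. A one-line counterexample: on $\mathbb{S}^1\subset\R^2$ take $\tilde f(x,y)=(-y,x)$; at $(1,0)$ the tangent space is $\mathrm{span}\{(0,1)\}$, yet $D\tilde f\cdot(0,1)^T=(-1,0)^T$ is purely normal. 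Consequently your claim that ``the tangential component of $\tilde{\mbf{a}}(t)\,D\tilde f$ is a function of the tangential component of $\tilde{\mbf{a}}(t)$ alone'' is unjustified: when you pair a normal covector with $D\tilde f\cdot v$ for tangent $v$, you can get a nonzero contribution, so the tangential adjoint ODE does not obviously close.

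The fix is to shift the argument from $D\tilde f$ to the flow map. Since $\tilde{\Phi}_{t_e}^{t}$ genuinely preserves $\M$, its differential does send tangent directions to tangent directions, and this is exactly what makes the restriction $\tilde{\mbf{a}}(t)|_{T_{\mbf{z}(t)}\M}=D_{\mbf{z}(t_e)}L\circ D_{\mbf{z}(t)}\Phi_{t_e}^{t}$ extension-independent. Differentiating this identity in $t$ (rather than trying to close the ambient ODE componentwise) recovers Equation~\eqref{eqn:manifoldAdjoint}. That computation is essentially what the paper's short Taylor/chain-rule argument is doing.
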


\begin{remark}
    This theorem resembles the adjoint method in \cite{Chen2018NeuralOD} precisely because of our ambient space condition. In particular, our curve $\mbf{z}: [t_s, t_e] \to \M$ can be considered as a curve in the ambient space. Furthermore, we do not lose any generality since such an embedding always exists by the Whitney Embedding Theorem \cite{Whitney1935Differentiable}, if we simply set $d = 2n$.
\end{remark}

 The full proof of Theorem \ref{thm:manifoldAdjoint} can be found in Appendix \ref{thm:manifoldAdjointAppendix}. Through this adjoint state, gradients can be derived for other parameters in the equation such as $t_s, t_e$, initial condition $z_s$, and weights $\theta$.

\begin{figure}[htb!]
\centering
 \includegraphics[width=0.8\textwidth]{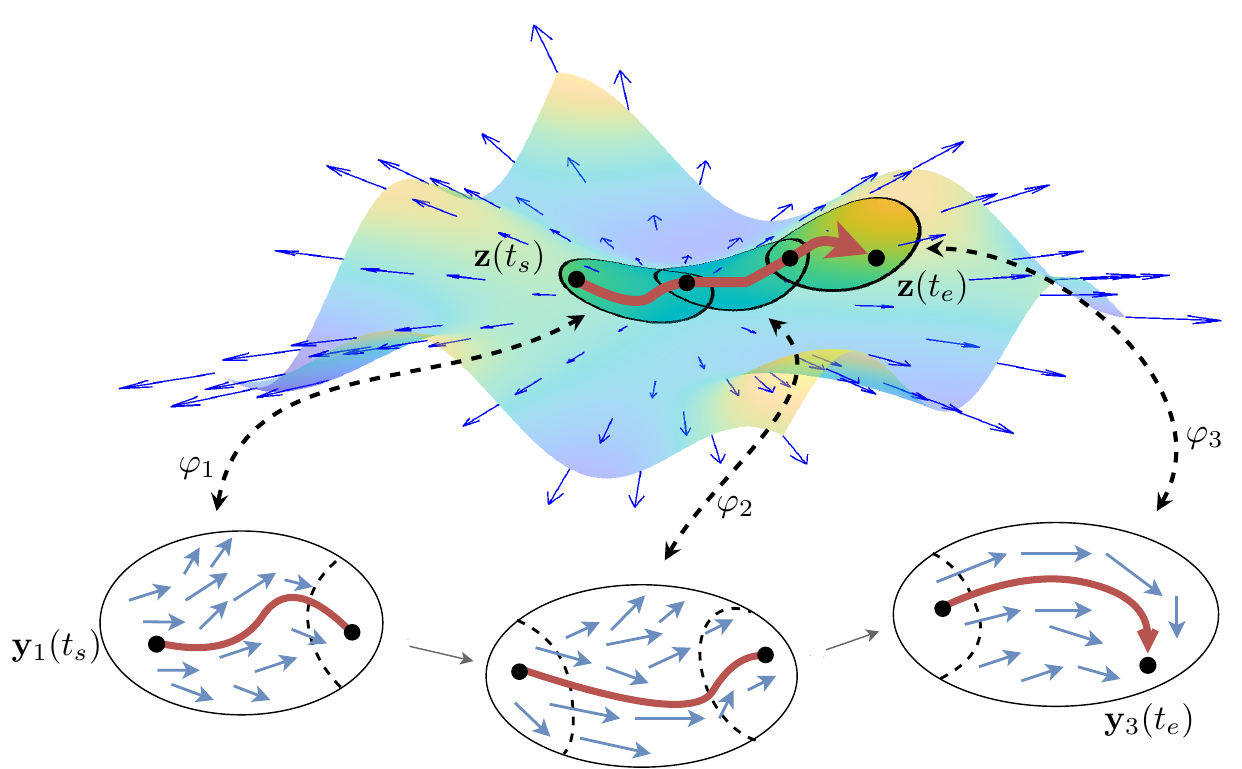}
 \caption{Solving a manifold ODE with our dynamic chart method. We use $3$ charts.}
 \label{fig:dynamicChart}
\end{figure}

\section{Dynamic Chart Method}\label{sec:trivialization}

While our above theory is fully expressive and general, in this section we address certain computational issues and augment applicability with our \textit{dynamic chart method}.


\begin{algorithm}[t]
    \SetAlgoLined
    Given $f$, local charts $\varphi_x$, starting condition $z_s$ and starting/ending times of $t_s, t_e$.\\
    Initialize $z \leftarrow z_s$, $\tau \leftarrow t_s$\\
    \While{$\tau < t_e$}{
        Construct an equivalent differential equation $\frac{d\mbf{y}(t)}{dt} = D_{\varphi_z(\mbf{y}(t))} \varphi_z^{-1} \circ f(\varphi_z(\mbf{y}(t)), t)$ with initial condition $\mbf{y}(\tau) = \varphi_z^{-1}(z)$\\
        Solve $\mbf{y}$ locally using some numerical integration technique in Euclidean space. Specifically, solve in some interval $[\tau, \tau + \epsilon]$ for which $\mbf{z}([\tau, \tau + \epsilon]) \subseteq \mrm{im} \varphi_z$.\\
        $z \leftarrow \varphi_z(\mbf{y}(\tau + \epsilon)), \tau \leftarrow \tau + \epsilon$
    }
    \caption{Dynamic Chart Forward Pass}
    \label{alg:implicitNumerical}
\end{algorithm}

The motivation for our dynamic chart method comes from \cite{LezcanoCasado2019TrivializationsFG}, where the author introduces a \textit{dynamic manifold trivialization} technique for Riemannian gradient descent. Here, instead of applying the traditional Riemannian gradient update $z_{t + 1} = \exp_{z_t}(-\eta\nabla_{z_t} f)$ for $N$ time steps, the author repeatedly applies $n \ll N$ local updates. Each update consists of a local diffeomorphism to Euclidean space, $n$~equivalent Euclidean gradient updates, and a map back to the manifold. This allows us to lower the number of expensive exponential map calls and invoke existing Euclidean gradient optimizers such as Adam \cite{Kingma2014adam}. This is similar in spirit to \cite{Gemici2016NormalizingFO}, but crucially only relies on local diffeomorphisms rather than a global diffeomorphism.

We can adopt this strategy in our Neural Manifold ODE. Specifically, we develop a generalization of the dynamic manifold trivialization for the manifold ODE forward pass. In a similar spirit, we use a local chart to map to Euclidean space, solve an equivalent Euclidean ODE locally, and project back to the manifold using the chart. The full forward pass is given by Algorithm \ref{alg:implicitNumerical} and is visualized Figure~\ref{fig:dynamicChart}.

We present two propositions which highlight that this algorithm is guaranteed to converge to the manifold ODE solution. The first shows that $\varphi_z(\mbf{y})$ solves the manifold differential equation locally and the second shows that we can pick a finite collection of charts such that we can integrate to time~$t_e$.

\begin{prop}[Correctness]
    If $\mbf{y}(t) : [\tau,\tau+\epsilon] \to \R^n$ is a solution to $\frac{d\mbf{y}(t)}{dt} = D_{\varphi_{z}(\mbf{y}(t))} \varphi_{z}^{-1} \circ f(\varphi_{z}(\mbf{y}(t)), t)$ with initial condition $\mbf{y}(\tau) = \varphi_{z}^{-1}(z)$, then $\mbf{z}(t) = \varphi_{z}(\mbf{y}(t))$ is a valid solution to Equation \ref{eqn:manifoldOde} on $[\tau,\tau+\epsilon]$.
\end{prop}

\begin{prop}[Convergence]
    There exists a finite collection of charts $\{\varphi_i\}_{i = 1}^k$ s.t. $\mbf{z}([t_s, t_e]) \subseteq \displaystyle\bigcup_{i = 1}^k \mrm{im} \varphi_i$.
\end{prop}

Proofs of these propositions are given in Appendix~\ref{appendix:trivialization}. Note that this forward integration is implicit, indicating the connections between \cite{hairer2010solving, Hairer2011manifold} and \cite{LezcanoCasado2019TrivializationsFG}. We realize this construction by finding a principled way to pick charts and incorporate this method into neural networks by developing a backward gradient computation.

We can intuitively visualize this dynamic chart forward pass as a sequence of $k$ Neural ODE solvers $\{\mathrm{ODE}_i\}_{i \in [k]}$ with chart transitions $\varphi_{i_2}^{-1} \circ \varphi_{i_1}$ connecting them. Here, we see how the smooth chart transition property comes into play, as passing between charts is the necessary step in constructing a global solution. In addition, this construction provides a \textit{chart-based backpropagation}. Under this dynamic chart forward pass, we can view a Neural Manifold ODE block as the following composition of Neural ODE blocks and chart transitions:
\begin{equation}\label{eqn:MODEComposition}
    \mathrm{MODE} = \varphi_k \circ \mathrm{ODE}_k \circ (\varphi_k^{-1} \circ \varphi_{k - 1}) \circ \dots \circ (\varphi_2^{-1} \circ \varphi_1) \circ \mathrm{ODE}_1 \circ \varphi_1^{-1}
\end{equation}
This allows for gradient computation through backpropagation. We may differentiate through the Neural ODE blocks by the Euclidean adjoint method \cite{pontryagin1962mathematical,Chen2018NeuralOD}.

To finalize this method, we give a strategy for picking these charts for Riemannian manifolds\footnote{Note that we do not lose generality since all manifolds can be given a Riemannian metric.}. As previously mentioned, the exponential map serves as a local diffeomorphism from the tangent space (which can be identified with $\R^n$) and the manifold, so it acts as a chart. Similar to \cite{Falorsi2019ReparameterizingDO}, at each point there exists a radius $r_x$ such that $\exp_x$ is a diffeomorphism when restricted to $B_{r_x} := \{v \in T_x\M : \norm{v}_\rho < r_x\}$. With this information, we can solve the equivalent ODE with solution $\mbf{y}$ until we reach a point $y > (1 - \epsilon) \norm{r_x}_\rho$, at which point we switch to the exponential map chart defined around $\exp_x(y)$. Complete details are provided in Appendix~\ref{appendix:dynamicChart}.

Our dynamic chart method is a significant advancement over previous implicit methods since we can easily construct charts as we integrate. Furthermore, it provides many practical improvements over vanilla Neural Manifold ODEs. Specifically, we

\begin{enumerate}[label=(\roman*)]
    \item \textit{can perform faster evaluations.} The aforementioned single step algorithms rely on repeated use of the Lie and Riemannian exponential maps. These are expensive to compute and our method can sidestep this expensive evaluation. In particular, the cost is shifted to the derivative of the chart, but by defining dynamics $g$ on the tangent space directly, we can avoid this computation. We use this for our hyperbolic space construction, where we simply solve $\frac{d\mbf{y}(t)}{dt} = g(\mbf{y}(t), t)$.

    \item \textit{avoid catastrophic gradient instability.} If the dimension of $\M$ is less than the dimension of the ambient space, then the tangent spaces are of measure $0$. This means that the induced error from the ODE solver will cause our gradients to leave their domain, resulting in a catastrophic failure. However, since the errors in Neural ODE blocks do not cause the gradient to leave their domain and as our charts induce only a precision error, our dynamic chart method avoids this trap.

    \item \textit{access a wider array of ODE advancements.} While substantial work has been done for manifold ODE solvers, the vast majority of cutting edge ODE solvers are still restricted to Euclidean space. Our dynamic chart method can make full use of these advanced solvers in the Neural ODE blocks. Additionally, Neural ODE improvements such as \cite{Dupont2019AugmentedNO, Jia2019Neural} can be directly integrated without additional manifold constructions.
\end{enumerate}

\section{Manifold Continuous Normalizing Flows} 

With our dynamic chart Neural Manifold ODE, we can construct a Manifold Continuous Normalizing Flow (MCNF). The value can be integrated directly through the ODE, so all that needs to be given is the change in log probability. Here, we can invoke continuous change of variables \cite{Chen2018NeuralOD} on the Neural ODE block and can use the smooth chart transition property (which guarantees that the charts are diffeomorphisms) to calculate the final change in probability as:
\begin{align}\label{eqn:mcnflog}
\log p(\mrm{MODE}) = \log \pi &-\sum_{i = 1}^k \paren{\log\det  |D \varphi_i| + \log \det |D \varphi_i^{-1}| + \int \mrm{tr}(D \varphi_i^{-1} \circ f)}
\end{align}
Note that we drop derivative and integration arguments for brevity. A full expression is given in Appendix~\ref{appendix:normalizingFlows}.

For our purposes, the last required computation is the determinant of the $D_v\exp_x$. We find that these can be evaluated analytically, as shown in the cases of the hyperboloid and sphere \cite{Nagano2019AWN, Skopek2019MixedcurvatureVA}.

Since the MCNF requires only the local dynamics (which are in practice parameterized by the exponential map), this means that the construction \textit{generalizes to arbitrary manifolds}. Furthermore, we \textit{avoid diffeomorphism issues}, such as in the case of two antipodal points on the sphere, simply by restricting our chart domains to never include these conjugate points.

\begin{wrapfigure}{R}{0.6\textwidth}
  \vspace{-20pt}
  \newcommand{\headsize}{\scriptsize}
  \centering
  \begin{tabular}{cccc}
   \headsize{Target} & \headsize{WHC \cite{Bose2020LatentVM}} & \headsize{PRNVP \cite{Gemici2016NormalizingFO}} & \headsize{MCNF (Ours)}\\
   \midrule[1pt]
  \includegraphics[width=0.09\textwidth]{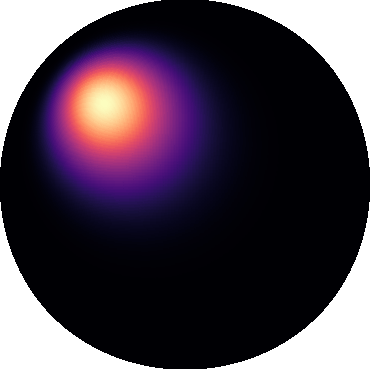} &
  \includegraphics[width=0.09\textwidth]{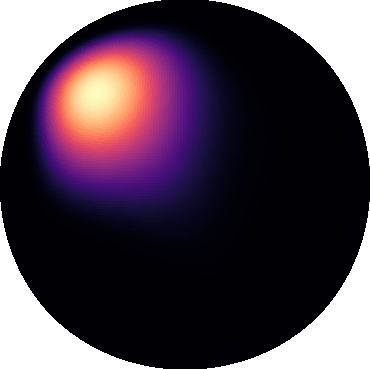} &
  \includegraphics[width=0.09\textwidth]{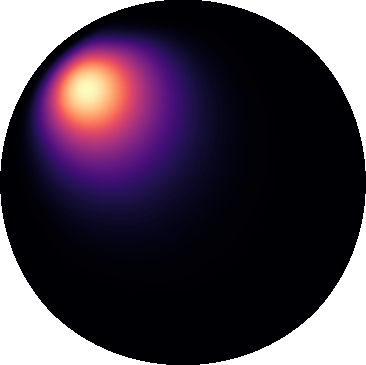} &
  \includegraphics[width=0.09\textwidth]{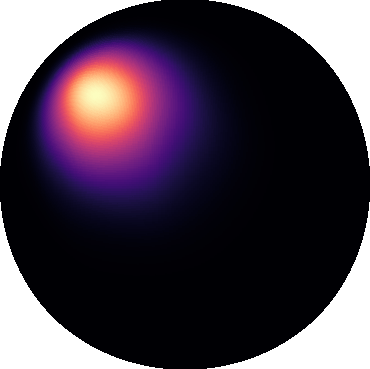}\\
  
  \includegraphics[width=0.09\textwidth]{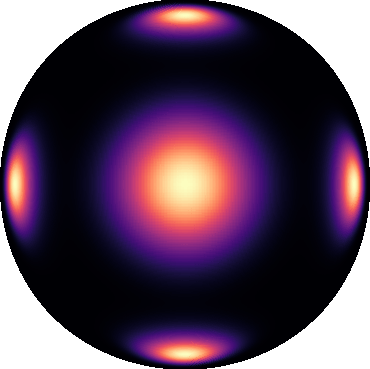} &
  \includegraphics[width=0.09\textwidth]{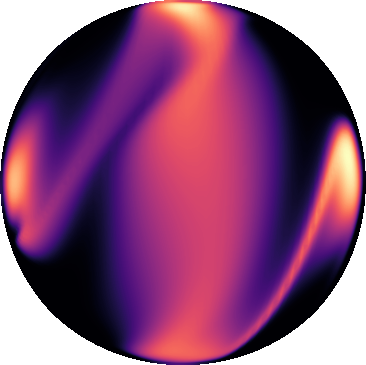} &
  \includegraphics[width=0.09\textwidth]{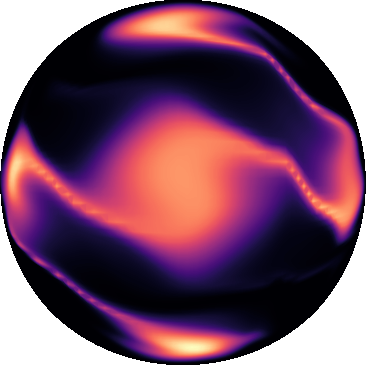} &
  \includegraphics[width=0.09\textwidth]{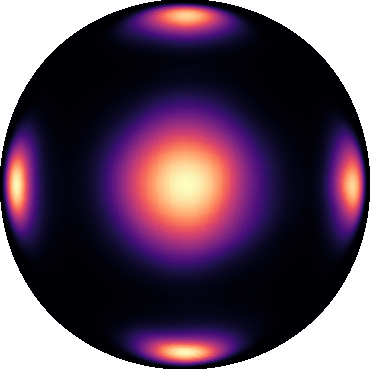} \\

  \includegraphics[width=0.09\textwidth]{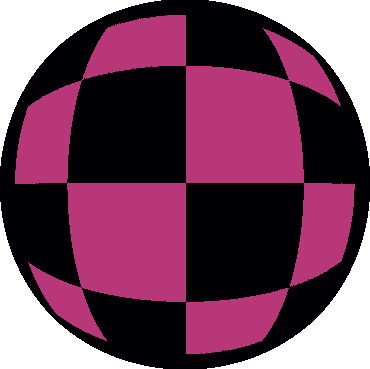} &
  \includegraphics[width=0.09\textwidth]{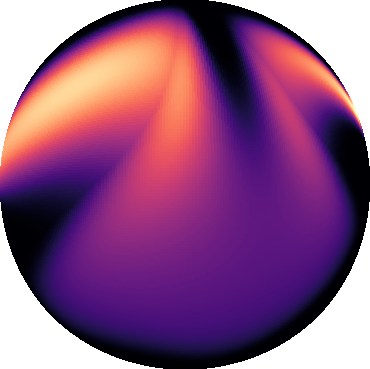} &
  \includegraphics[width=0.09\textwidth]{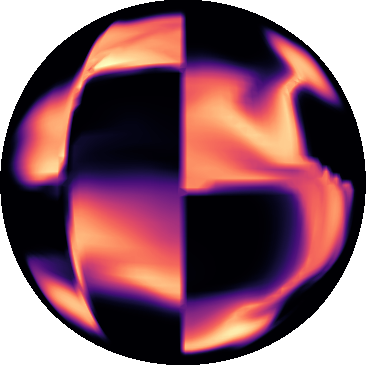} &
  \includegraphics[width=0.09\textwidth]{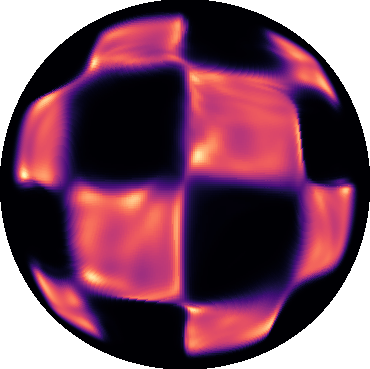} \\
  
  \includegraphics[width=0.09\textwidth]{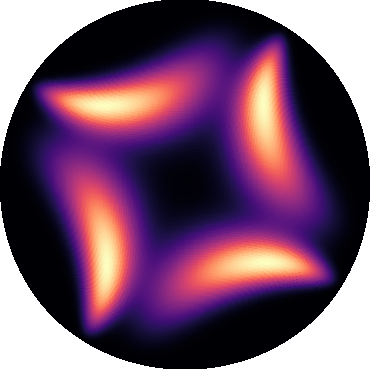} &
  \includegraphics[width=0.09\textwidth]{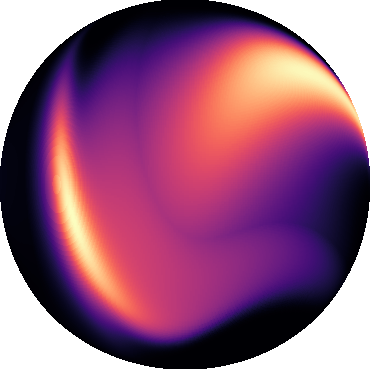} &
  \includegraphics[width=0.09\textwidth]{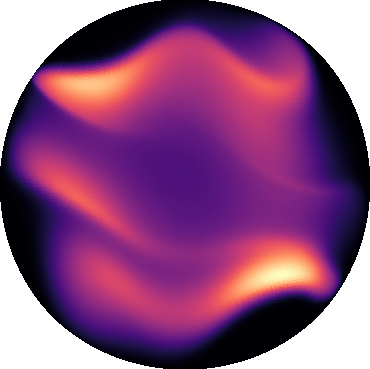} &
  \includegraphics[width=0.09\textwidth]{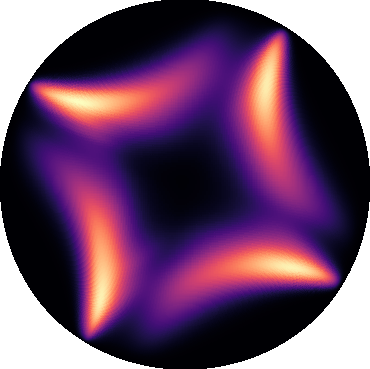}
  \end{tabular}
  
  \caption{Density estimation on the hyperboloid $\mathbb{H}^2$, which is projected to the Poincar\'e Ball for visualization.}
  \label{fig:hyperbolicDensity}
  
    \begin{tabular}{ccc}
         \headsize{Target} & \headsize{NCPS \cite{Rezende2020NormalizingFO}} & \headsize{MCNF (Ours)} \\
         \midrule[1pt]
         \includegraphics[width=0.14\textwidth]{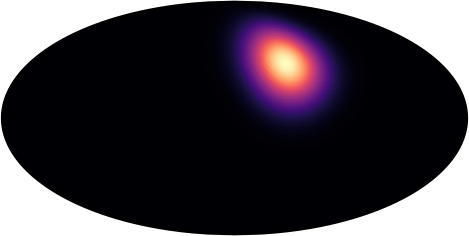} &
         \includegraphics[width=0.14\textwidth]{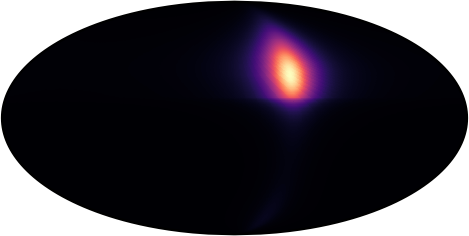}  &
         \includegraphics[width=0.14\textwidth]{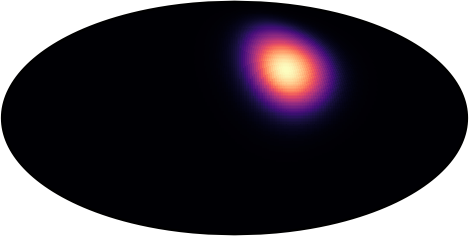}  \\
         
         \includegraphics[width=0.14\textwidth]{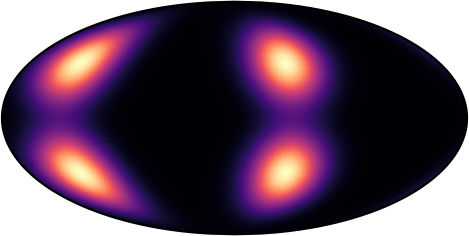} &
         \includegraphics[width=0.14\textwidth]{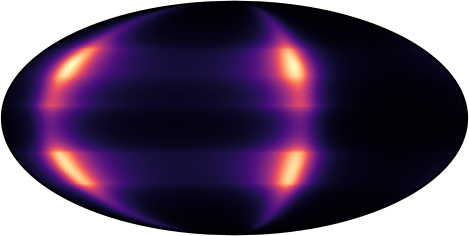} &
         \includegraphics[width=0.14\textwidth]{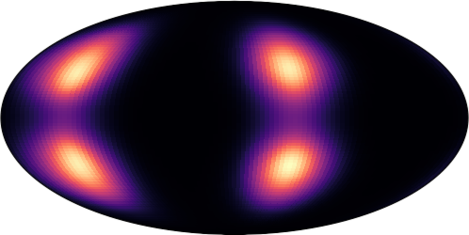} \\
         
         \includegraphics[width=0.14\textwidth]{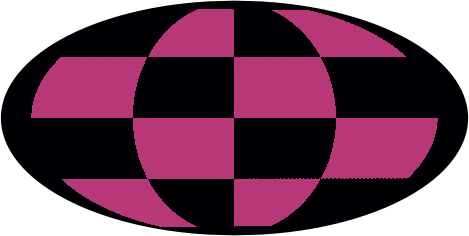} &
         \includegraphics[width=0.14\textwidth]{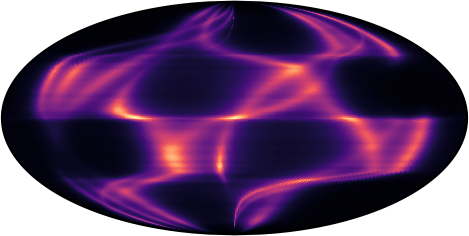} &
         \includegraphics[width=0.14\textwidth]{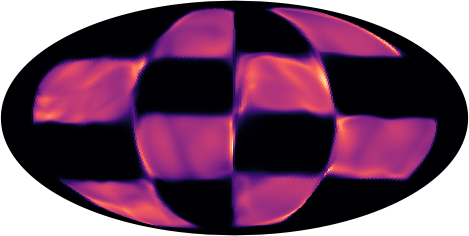} \\
    \end{tabular}
    \caption{Density estimation on the sphere $\mathbb{S}^2$, which is projected to two dimensions by the Mollweide projection.}
    \label{fig:sphereDensity}
    \vspace{-25pt}
\end{wrapfigure}

\section{Experiments}

To test our MCNF models, we run density estimation and variational inference experiments. Though our method is general, we take $\M$ to be two spaces of particular interest: hyperbolic space $\mathbb H^n$ and the sphere $\mathbb S^n$. Appendix~\ref{appendix:normalizingFlows} concretely details the computation of MCNF in these spaces. Full experimental details can be found in Appendix~\ref{appendix:trainingDetails}.

\subsection{Density Estimation}\label{subsec:DensityEstimation}

We train normalizing flows for estimation of densities in the hyperbolic space $\mathbb H^2$ and the sphere $\mathbb S^2$, as these spaces induce efficient computations and are easy to visualize. For hyperbolic space, the baselines are Wrapped Hyperboloid Coupling (WHC) \cite{Bose2020LatentVM} and Projected NVP (PRNVP), which learns RealNVP over the projection of the hyperboloid to Euclidean space \cite{Gemici2016NormalizingFO,Dinh2017DensityEU}. On the sphere, we compare with the recursive construction of \cite{Rezende2020NormalizingFO}, with non-compact projection used for the $\mathbb S^1$ flow (NCPS). As visualized in Figures~\ref{fig:hyperbolicDensity} and \ref{fig:sphereDensity}, our normalizing flows are able to match complex target densities with significant improvement over the baselines. MCNF is even able to fit discontinuous and multi-modal target densities; baseline methods cannot fit these cases as they struggle with reducing probability mass in areas of low target density.

\subsection{Variational Inference}

We train a hyperbolic VAE \cite{Mathieu2019HierarchicalRW} and Euclidean VAE \cite{Kingma2014AutoEncodingVB} for variational inference on Binarized Omniglot \cite{Lake2015Human} and Binarized MNIST \cite{Lecun98gradient-basedlearning}. Both of these datasets have been found to empirically benefit from hyperbolic embeddings in prior work \cite{Nagano2019AWN,Mathieu2019HierarchicalRW,Bose2020LatentVM,Khrulkov2019HyperbolicIE,Skopek2019MixedcurvatureVA}. We compare different flow layers in the latent space of the VAEs. For the Euclidean VAE, we compare with RealNVP \cite{Dinh2017DensityEU} and CNF \cite{Chen2018NeuralOD,Grathwohl2019FFJORDFC}. Along with the two hyperbolic baselines used for density estimation, we also compare against the Tangent Coupling (TC) model in \cite{Bose2020LatentVM}. As shown in Table~\ref{tbl:TestLL}, our continuous flow regime is more expressive and learns better than all hyperbolic baselines. In low dimensions, along with the other hyperbolic models, our approach tends to outperform Euclidean models on Omniglot and MNIST. However, in high dimension the hyperbolic models do not reap as much benefit; even the baseline HVAE does not consistently outperform the Euclidean VAE.

\begin{table}[t]
\caption{MNIST and Omniglot average negative test log likelihood (lower is better) and standard deviation over five trials for varying dimensions.}
\label{tbl:TestLL}
\centering
\scriptsize
\begin{tabular}{clllllll}
\cmidrule[\heavyrulewidth]{2-8}
  & & \multicolumn{3}{c}{MNIST} & \multicolumn{3}{c}{Omniglot} \\
  \cmidrule(lr){3-5} \cmidrule(lr){6-8}
  & & \tabcent{2}  & \tabcent{4}  & \tabcent {6} & \tabcent{2} & \tabcent{4}  & \tabcent{6} \\
\cmidrule[\heavyrulewidth]{2-8}
\addlinespace[3pt]
\multirow{3}{*}{\rotatebox[origin=c]{90}{Euclidean}} & VAE\cite{Kingma2014AutoEncodingVB} & $143.06\pm .3$ & $117.57 \pm .5$ & $102.13 \pm .2$ & $154.31 \pm .5$ & $143.37 \pm .2$ & $138.65 \pm .1$ \\
& RealNVP \cite{Dinh2017DensityEU}   & $142.09 \pm .7$ & $116.32 \pm .7$ & $100.95 \pm .1$ & $153.93\pm .5$   & $142.98 \pm .3$  & $\mbf{137.21} \pm .1$  \\
& CNF \cite{Grathwohl2019FFJORDFC}   & $141.16 \pm .4$ & $116.28 \pm .5$& $100.64 \pm .1$ & $154.05 \pm .2$ & $143.11 \pm .4$  & $137.32 \pm .7$  \\
\addlinespace[3pt]
\cmidrule[.3pt]{2-8}
\addlinespace[3pt]
\multirow{4}{*}{\rotatebox[origin=c]{90}{Hyperbolic}} & HVAE\cite{Mathieu2019HierarchicalRW} & $140.04\pm .9$ & $114.81 \pm .8$ & $100.45 \pm .2$ & $153.97 \pm .3$ & $144.10 \pm .8$ & $138.02 \pm .3$ \\
& TC \cite{Bose2020LatentVM}  & $139.58 \pm .4$ & $114.16 \pm .6$ & $100.45 \pm .2$ & $157.11 \pm 2.7$ & $143.05 \pm .5$ & $137.49 \pm .1$ \\
& WHC \cite{Bose2020LatentVM}  & $140.46 \pm 1.3$ & $113.78 \pm .3$ & $100.23 \pm .1$ & $158.08 \pm 1.0$ & $143.23 \pm .6$ & $137.64 \pm .1$ \\
& PRNVP \cite{Gemici2016NormalizingFO} & $140.43 \pm 1.8$ & $113.93 \pm .3$ & $100.06 \pm .1$ & $156.71 \pm 1.7$ & $143.00 \pm .3$ & $137.57 \pm .1$  \\
& MCNF (ours)       & $\mbf{138.14} \pm .5$ & $\mbf{113.47} \tiny{\pm .3}$ & $\mathbf{99.89} \pm .1$ & $\mbf{152.98}\pm .1$ & $142.99 \pm .3$ & $137.29 \pm .1$ \\
\addlinespace[3pt]
\cmidrule[\heavyrulewidth]{2-8}
\end{tabular}
\end{table}

\vspace{-5pt}
\section{Conclusion}
\vspace{-5pt}

We have presented Neural Manifold ODEs, which allow for the construction of continuous time manifold neural networks. In particular, we introduce the relevant theory for defining ``pure'' Neural Manifold ODEs and augment it with our dynamic chart method. This resolves numerical and computational cost issues while allowing for better integration with modern Neural ODE theory. With this framework of continuous manifold dynamics, we develop Manifold Continuous Normalizing Flows. Empirical evaluation of our flows shows that they outperform existing manifold normalizing flow baselines on density estimation and variational inference tasks. Most importantly, our method is completely general as it does not require anything beyond local manifold structure.

We hope that our work paves the way for future development of manifold-based deep learning. In particular, we anticipate that our general framework will lead to other continuous generalizations of manifold neural networks. Furthermore, we expect to see application of our Manifold Continuous Normalizing Flows for topologically nontrivial data in lattice quantum field theory, motion estimation, and protein structure prediction.

\section{Broader Impact}

As mentioned in the introduction, our method has applications to physics, robotics, and biology.  While there are ethical and social concerns with parts of these fields, our work is too theoretical for us to say for sure what the final impact will be. For deep generative models, there are overarching concerns with generating fake data for malicious use (e.g. deepfake impersonations). However, our work is more concerned with accurately modelling data topology rather than generating hyper-realistic vision or audio samples, so we do not expect there to be any negative consequence in this area.

 \section{Acknowledgements}

 We would like to acknowledge Prof. Austin Benson and Junteng Jia for their insightful comments and suggestions. In addition, we would like to thank Facebook AI for funding equipment that made this work possible. We would also like to thank Joey Bose for providing access to his prerelease code.

\bibliography{nmode}
\bibliographystyle{plain}

\newpage

\appendix

\section{Proofs of Propositions}

\subsection{Dynamic Chart Method}\label{appendix:trivialization}

\begin{prop}[Correctness]
    If $\mbf{y}(t) : [\tau,\tau+\epsilon] \to \R^n$ is a solution to $\frac{d\mbf{y}(t)}{dt} = D_{\varphi_{z}(\mbf{y}(t))} \varphi_{z}^{-1} \circ f(\varphi_{z}(\mbf{y}(t)), t)$ with initial condition $\mbf{y}(\tau) = \varphi_{z}^{-1}(z)$, then $\mbf{z}(t) = \varphi_{z}(\mbf{y}(t))$ is a valid solution to Equation \ref{eqn:manifoldOde} on $[\tau,\tau+\epsilon]$.
\end{prop}
\begin{proof}
    We see that if $\mbf{z}(t) = \varphi_z(\mbf{y}(t))$ then for all $t' \in [\tau, \tau + \epsilon]$
    
    \begin{align}
        \frac{d\mbf{z}(t')}{dt} &= D_{\mbf{y}(t')} \varphi_z \circ \frac{d\mbf{y}(t')}{dt}\\
        &= D_{\mbf{y}(t')}\varphi_z \circ D_{\varphi_z(\mbf{y}(t'))} \varphi_z^{-1} \circ f(\varphi_z(\mbf{y}(t')), t')\\
        &= f(\varphi_z(\mbf{y}(t')), t')\\
        &= f(\mbf{z}(t'), t')
    \end{align}
\end{proof}

\begin{prop}[Convergence]
    There exists a finite collection of charts $\{\varphi_i\}_{i = 1}^k$ s.t. $\mbf{z}([t_s, t_e]) \subseteq \displaystyle\bigcup_{i = 1}^k \mrm{im} \varphi_i$.
\end{prop}
\begin{proof}
    Around each point $z \in \mbf{z}([t_s, t_e])$ pick a chart $\varphi_z$. Then note that $\{\varphi_z\}_{z \in \mbf{z}([t_s, t_e])}$ satisfies $\mbf{z}([t_s, t_e]) \subseteq \displaystyle\bigcup_z \mrm{im} \varphi_z$. But, the curve is compact since $[t_s, t_e]$ is compact (and $\mbf{z}$ is assumed to be continuous) so we can take a finite subset $\{\varphi_i\}_{i = 1}^k$ that covers the curve.
\end{proof}

\subsection{Derivation of Gradient for Adjoint State}

In this section we prove Theorem \ref{thm:manifoldAdjoint}. The proof follows from the analogous one in \cite{Chen2018NeuralOD}, though we replace certain operations with their manifold counterparts.

\begin{thm}\label{thm:manifoldAdjointAppendix}
    Suppose we have some manifold ODE as given in Equation \ref{eqn:manifoldOde} and we define some loss function $L: \M \to \R$. Suppose that there is an embedding of $\M$ in some Euclidean space $\R^d$ and we identify $T_x\M$ with an $n$-dimensional subspace of $\R^d$. If we define the adjoint state to be $\mbf{a}(t) = D_{\mbf{z}(t)} L$, then the adjoint satisfies
    \begin{equation}
        \frac{d\mbf{a}(t)}{dt} = - \mbf{a}(t) D_{\mbf{z}(t)}f(\mbf{z}(t), t)
    \end{equation}
\end{thm}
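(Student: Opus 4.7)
The plan is to mimic the Euclidean adjoint derivation from Chen et al. step for step, using the ambient embedding to turn the manifold curve into an ordinary curve in $\R^d$. Once $\M \hookrightarrow \R^d$ is fixed and each $T_x\M$ is identified with an $n$-dimensional subspace of $\R^d$, both $\mbf{z}(t)$ and $f(\mbf{z}(t),t)$ can be treated as objects in ambient Euclidean space: the curve $\mbf{z}:[t_s,t_e]\to\R^d$ is smooth, and the manifold ODE $\dot{\mbf{z}}(t)=f(\mbf{z}(t),t)$ is a genuine Euclidean ODE whose solution happens to remain on $\M$. The loss $L:\M\to\R$ can be extended to a smooth function on an open neighborhood of $\M$ in $\R^d$ (e.g.\ by composing with a tubular-neighborhood retraction), which lets us make sense of $\mbf{a}(t)=D_{\mbf{z}(t)}L$ as a row covector in $(\R^d)^*$; the choice of extension is immaterial for the adjoint flow because $\dot{\mbf{z}}$ stays tangent to $\M$.

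First I would write down the flow map: for $t'>t$ close to $t$, let $\Phi_{t,t'}:\R^d\dashrightarrow\R^d$ denote the time-$(t,t')$ solution operator of the ambient ODE, so that $\mbf{z}(t')=\Phi_{t,t'}(\mbf{z}(t))$. By the composition $L\circ\Phi_{t,t'}$ and the chain rule,
\begin{equation*}
\mbf{a}(t) \;=\; D_{\mbf{z}(t)}(L\circ \Phi_{t,t'}) \;=\; \mbf{a}(t')\, D_{\mbf{z}(t)}\Phi_{t,t'}.
\end{equation*}
Next I would expand $\Phi_{t,t+\varepsilon}(z)=z+\varepsilon f(z,t)+O(\varepsilon^2)$, so that $D_{\mbf{z}(t)}\Phi_{t,t+\varepsilon}=I+\varepsilon D_{\mbf{z}(t)}f(\mbf{z}(t),t)+O(\varepsilon^2)$, and then plug this into the identity above to get
\begin{equation*}
\mbf{a}(t)\;=\;\mbf{a}(t+\varepsilon)\bigl(I+\varepsilon D_{\mbf{z}(t)}f(\mbf{z}(t),t)\bigr)+O(\varepsilon^2).
\end{equation*}
Rearranging, dividing by $\varepsilon$, and letting $\varepsilon\to 0^+$ yields the claimed ODE $\dot{\mbf{a}}(t)=-\mbf{a}(t)\,D_{\mbf{z}(t)}f(\mbf{z}(t),t)$.

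The main obstacle, and the reason the statement needs the ambient-space identification, is making sense of $D_{\mbf{z}(t)}f$ as an endomorphism that can be iterated against the covector $\mbf{a}(t)$. Intrinsically, $f$ is only defined on $\M$ and takes values in the moving tangent bundle, so its "derivative" is a connection-dependent object; but once we pass to the ambient space, both $\mbf{z}$ and $f(\cdot,t)$ can be extended to smooth Euclidean vector-valued maps, and $D_{\mbf{z}(t)}f$ becomes a plain $d\times d$ Jacobian. I would verify that the final expression does not depend on the choice of smooth extensions of $L$ and $f$ off $\M$: two extensions agree to first order along tangent directions at $\mbf{z}(t)$, and since $\mbf{a}(t)$ only sees derivatives along the admissible $n$-dimensional tangent subspace (this is exactly what $\mbf{a}(t)=D_{\mbf{z}(t)}L$ on $\M$ encodes), the product $\mbf{a}(t)\,D_{\mbf{z}(t)}f$ is extension-independent. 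With this consistency check in hand the computation above goes through verbatim, and gradients with respect to $z_s$, $t_s$, $t_e$, and $\theta$ follow by the same augmentation trick used in the Euclidean Neural ODE proof.
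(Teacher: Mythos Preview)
Your proposal is correct and follows essentially the same approach as the paper: both arguments pass to the ambient $\R^d$, write the first-order expansion of the flow map $\mbf{z}(t+\varepsilon)=\mbf{z}(t)+\varepsilon f(\mbf{z}(t),t)+O(\varepsilon^2)$, apply the chain rule $\mbf{a}(t)=\mbf{a}(t+\varepsilon)\,D_{\mbf{z}(t)}\Phi_{t,t+\varepsilon}$, and take the limit. Your discussion of extension-independence of $L$ and $f$ off $\M$ is a nice added rigor the paper omits, but the core derivation is identical.
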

\begin{proof}
    Consider the first order approximation of $\mbf{z}(t + \epsilon)$. Since we are embedding $T_x\M \subseteq \R^d$, then under standard $\R^d$ computations we have that
    
    \begin{equation}\label{eqn:firstOrder}
        \mbf{z}(t + \epsilon) = \mbf{z}(t) + \epsilon f(\mbf{z}(t), t) + \mathcal{O}(\epsilon^2)
    \end{equation}
    
    We set $T_\epsilon(\mbf{z}(t), t) := \mbf{z}(t + \epsilon)$. As in the original adjoint method derivation \cite{Chen2018NeuralOD}, we utilize the chain rule
    
    \begin{equation}\label{eqn:AdjointChainRule}
        D_{\mbf{z}(t)} L =D_{\mbf{z}(t + \epsilon)} L \circ D_{\mbf{z}(t)} T_\epsilon(\mbf{z}(t), t) \text{\quad or \quad} \mbf{a}(t)=\mbf{a}(t+\epsilon)D_{\mbf{z}(t)} T_\epsilon(\mbf{z}(t), t)
    \end{equation}
    
    Using these, we get that
    \begin{align}
        \frac{d\mbf{a}(t)}{dt} &=\lim\limits_{\epsilon\rightarrow 0+}\frac{\mbf{a}(t+\epsilon)-\mbf{a}(t)}{\epsilon} & \\
        &=\lim\limits_{\epsilon\rightarrow 0+}\frac{\mbf{a}(t+\epsilon)-\mbf{a}(t+\epsilon)D_{\mbf{z}(t)} T_\epsilon(\mbf{z}(t), t)}{\epsilon} &\text{(by Equation \ref{eqn:AdjointChainRule})} \\
        &=\lim\limits_{\epsilon\rightarrow 0+}\frac{\mbf{a}(t+\epsilon)-\mbf{a}(t+\epsilon)D_{\mbf{z}(t)}(\mbf{z}(t)+\epsilon f(\mbf{z}(t), t) + \mathcal{O}(\epsilon^{2}))}{\epsilon} &\text{(by Equation \ref{eqn:firstOrder})} \\
        &=\lim\limits_{\epsilon\rightarrow 0+}\frac{-\epsilon \mbf{a}(t+\epsilon)D_{\mbf{z}(t)} f(\mbf{z}(t), t) +O(\epsilon^{2})}{\epsilon} & \\
        &= -\mbf{a}(t)D_{\mbf{z}(t)}f(\mbf{z}(t), t)
    \end{align}
\end{proof}

\section{Computation of Manifold Continuous Normalizing Flows}\label{appendix:normalizingFlows}

\subsection{Background}

In Euclidean space, a normalizing flow $f$ is a diffeomorphism $f: \RR^n \to \RR^n$ that maps a base probability distribution into a more complex probability distribution. Suppose $z \sim \pi$ is a sample from the simple distribution. By the change of variables formula, the target density of an $x$ in terms of $p$ (the complex distribution) is
\begin{equation}
\log p(x) = \log \pi(z) - \log \det \abs{\parderiv{f^{-1}}{z}} 
\end{equation}
There exist a variety of functions $f$ which constrain the log Jacobian determinant to be computationally tractable. An important one is the Continuous Normalizing Flow (CNF). CNFs construct $f$ to be the solution of an ODE \cite{Chen2018NeuralOD,Grathwohl2019FFJORDFC}. Explicitly, let $t_0, t_1$ be starting and ending times with $t_0 < t_1$, and consider the ordinary differential equation $\frac{d\mbf{z}(t)}{dt} = g(\mbf{z}(t), t; \theta)$, where $\theta$ parameterizes the dynamics $g$. For a sample from the base distribution $z \sim \pi$, solving this ODE with initial condition $\mbf{z}(t_0) = z$ gives the sample from the target distribution $x = \mbf{z}(t_1)$. The change in the log density given by this model satisfies an ordinary differential equation called the instantaneous change of variables formula:
\begin{equation}\label{eqn:instcov}
    \frac{d \log p(\mbf{z}(t))}{dt} = -\mrm{tr}\left(D_{\mbf{z}(t)} g\right)
\end{equation}
We can therefore quantify the final probability as
\begin{equation}
    \log p(\mbf{z}(t_1)) = \log p(\mbf{z}(t_0)) - \int_{t_0}^{t_1} \mrm{tr}\left(D_{\mbf{z}(t)} g \right) \; dt.
\end{equation}

\subsection{Manifold Continuous Normalizing Flows}

For our MCNF we split up the original time $[t_s, t_e]$ into $[t_i, t_{i + 1}]$ for $i \in [k]$ where $t_s = t_1 < t_2 < \dots < t_k < t_{k + 1} = t_e$. From our curve $\mbf{z}$ we can select $z_i = \mbf{z}(t_i)$, and we have charts $\varphi_i : U_i \to V_i$ s.t. $z_i, z_{i + 1} \in V_i$. If our dynamics are determined by $\frac{d\mbf{z}(t)}{dt} = f(\mbf{z}(t), t; \theta)$ then this locally takes the form $\varphi_i(\widehat{f_i}(\varphi_i^{-1}(\mbf{z}(t), t; \theta)))$, in which $\widehat{f_i}$ is a CNF. The update after passing through a chart $\varphi_i$ and integrating is given by

\begin{equation*}
    \log p(z_{i+1}) = \log \pi(z_i) - \paren{\log \abs{ \det  D_{\widehat{f_i}(\varphi^{-1}(z_i))} \varphi_i} + \int_{t_i}^{t_{i + 1}} \mrm{tr} (D_{\varphi_i^{-1}(z_i)} \widehat{f_i})dt + \log \abs{\det D_{z_i} \varphi_i^{-1}}}
\end{equation*}

where $\log \abs{\det D \varphi_i}$ is a shorthand for the Riemannian probability update induced by the chart. Note that in general this is not the determinant (for instance when the map goes from elements in $\R^n$ to $\R^d$ where $d > n$). In practice it ends up being quite similar.

We can consider our manifold ODE as a composition of these updates. Therefore, we have that

\begin{align}
    \log &p(f(z)) = \nonumber \\
    &\log \pi(z) - \sum_{i = 1}^k \paren{\log \abs{ \det  D_{\widehat{f_i}(\varphi^{-1}(z_i))} \varphi_i} + \int_{t_i}^{t_{i + 1}} \mrm{tr} (D_{\varphi_i^{-1}(z_i)} \widehat{f_i})dt + \log \abs{\det D_{z_i} \varphi_i^{-1}}}
\end{align}

For our cases we will be setting $\varphi_i = \exp_{z_i}$.

\subsection{Base Distributions}\label{subsec:basedistr}

\textbf{Hyperbolic Space.} We will use the hyperbolic wrapped normal distribution $\mc{G}(\mu, \Sigma)$ where $\mu \in \M$ and $\Sigma \in \R^{n \times n}$ where $\M$ has dimension $n$ \cite{Nagano2019AWN}.

\begin{enumerate}
    \item \textbf{Sampling.} To sample a $m \in \M$, perform the following steps. A priori, set some $\mu_0 \in \M$. First, sample $v \in \mathcal{N}(\mu_0, \Sigma) \in T_{\mu_0}\M$. Then parallel transport this vector to the mean $\mu$ i.e. $u = \mrm{PT}_{\mu_0 \to \mu}(v)$. Lastly, project to the manifold using the exponential map $m = \exp_{\mu}(u)$.
    
    \item \textbf{Probability Density.} The probability density can be found through the composition of the parallel transport map and exponential map. Specifically, we have that
    
    \begin{equation}
        \log p(x) = \log p(v) - \log \abs{\det D_u\exp_\mu(u)} - \log \abs{\det D_v \mrm{PT}_{\mu_0 \to \mu}(v)}
    \end{equation}
\end{enumerate}

\textbf{Spherical Space.} We could possibly perform a relatively similar wrapped normal distribution \cite{Skopek2019MixedcurvatureVA}. However, we see that this is theoretically flawed since parallel transport between two conjugate points is not well defined. 

Instead, we will use the von Mises-Fisher distribution, a distribution on the $(n-1)$-sphere in $\mathbb{R}^n$ originally derived for use in directional statistics \cite{Fisher1987StatisticalAO}. We denote this distribution as $\mrm{vMF}(\mu, \kappa)$ where $\mu \in \mathbb{R}^n$ is treated as an element of $\mathbb{S}^{n-1}$ via the canonical embedding, and $\kappa \in \R_{\ge 0}$. Note the following about the von Mises-Fisher distribution:

\begin{enumerate}
    \item \textbf{Sampling.} The von Mises-Fisher can be sampled from with efficient methods \cite{Ulrich1984VMF,Davidson2018SVAE}.
    
    \item \textbf{Probability Density.} From \cite{Fisher1987StatisticalAO,Davidson2018HypersphericalVA}, we know that the density is given by

    \begin{align}
        p(z) & = \mc C_n(\kappa) \exp(\kappa \mu^T z)\\
        C_n(\kappa) & = \frac{\kappa^{n/2-1}}{(2\pi)^{n/2} \mc I_{n/2-1}(\kappa)}
    \end{align}
    
    Note that $||\mu||^2 = 1$, $\mathcal{C}_n(\kappa)$ is the normalizing constant, and that $\mathcal{I}_v$ denotes the modified Bessel function of the first kind of order $v$.
\end{enumerate}

These baseline probability distributions are visualized in Figure \ref{fig:basedistr}.

\begin{figure}
    \centering
    \subfloat[$\mc G(\mu_0, I)$]{\includegraphics[width=0.15\textwidth]{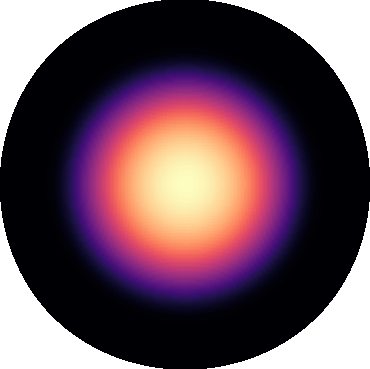}} \qquad
    \subfloat[$\mrm{vMF}(\mu_0,1)$]{\includegraphics[width=0.2\textwidth]{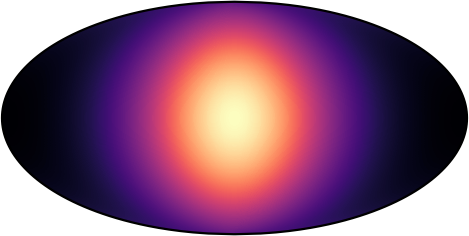}} \qquad
    \caption{Base distributions used for flow models. In $\mathbb H^2$, $\mu_0 = (1,0,0)$ and in $\mathbb S^2$, $\mu_0 = (-1,0,0)$. (a) is on $\mathbb{H}^2$ as visualized on the Poincar\'e ball. (b) is on $\mathbb S^2$ as visualized by the Mollweide projection.}
    \label{fig:basedistr}
\end{figure}

\subsection{Hyperbolic Space}

\subsubsection{Analytic Derivations}

\begin{table}[t]
    \centering
    \caption{Formulas for basic operations in hyperbolic space $\mathbb H^n$.}
    \label{tab:hyperbolicops}
    \begin{tabular}{cc}
    \toprule
        Manifold & $\mathbb{H}^n = \{x \in \R^{n + 1}: \norm{x}_{\mathcal{L}} = -1, x_0 > 0\}$\\[7pt]
        Tangent space & $T_x \mathbb{H}^n = \{v \in \R^{n + 1} : \inn{x, y}_{\mathcal{L}} = 0\}$\\[7pt]
        Exponential map &  $\exp_x(v) = \cosh(\norm{v}_{\mathcal{L}})x + \sinh(\norm{v}_{\mathcal{L}}) \frac{v}{\norm{v}_{\mathcal{L}}}$\\[7pt]
        Logarithmic map & $\log_x(y) = \frac{\arccosh(\inn{x, y}_{\mathcal{L}})}{\sinh(\arccosh(\inn{x, y}_{\mathcal{L}})}(y - \inn{x, y}_{\mathcal{L}} x)$\\[7pt]
        Parallel transport & $\mrm{PT}_{x\to y}(v) = v - \frac{\inn{y, v}_{\mathcal{L}}}{1 + \inn{x, y}_{\mathcal{L}}}(x + y)$\\[7pt]
        Tangent projection & $\mrm{proj}_x(u) = u + \inn{x, u}_{\mathcal{L}} x$ \\[7pt]
    \bottomrule
    \end{tabular}
\end{table}

For hyperbolic space, we will work with the hyperboloid $\mathbb{H}^n$. The analytic values of the operations are given in Table \ref{tab:hyperbolicops}. Recall that the \textit{Lorentz Inner Product and Norm} are given by

\begin{equation}
    \inn{x, u}_{\mathcal{L}} := -x_0u_0 + x_1u_1 + \dots x_nu_n \qquad \norm{x}_{\mathcal{L}} = \sqrt{\inn{x, x}_{\mathcal{L}}}
\end{equation}

In addition, there are many useful identities which appear in our pipeline.

\begin{enumerate}
    \item \textbf{Stereographic Projection.} To visualize $\mathbb H^2$ on the Poincar\'e ball, we use the stereographic projection as explained in  \cite{Skopek2019MixedcurvatureVA}, which maps a point $(\xi, x^T) \in \mathbb H^2 \subseteq \RR^3$ to a the point $x/(1+\xi) \subseteq \RR^2$ on the Poincar\'e ball.
    
    \item \textbf{Log Determinants.} The log determinant of the derivative of the exponential map is given by \cite{Nagano2019AWN, Skopek2019MixedcurvatureVA}:
    
    \begin{equation}\label{eqn:hyperbolicLogdet}
        \log \abs{\det D_v \exp_x} = (n-1) \left[\log \sinh(\norm{v}_{\mc L}) - \log \norm{v}_{\mc L}\right]
    \end{equation}
    
    The log determinant of the derivative of the log map is the negation of the above by the inverse function theorem, and the log determinant of the derivative of parallel transport is $0$.
\end{enumerate}

\subsubsection{Numerical Stability}

In order to ensure numerical stability, we examine several operations which are inherently numerically unstable and present solutions

\begin{enumerate}
    \item \textbf{Arccosh.} Arccosh has a domain of $(-\infty, -1) \cup (1, +\infty)$. In practice, we are concerned with the positive case, although the negative case can be similarly handled. Due to numerical instability a value $1 + \epsilon$ may be realized as $1$ in our floating point system. To compensate, we clamp the minimum value to be $1 + \epsilon_0$ for a small fixed $\epsilon_0$.

    \item \textbf{Sinh Division.} In the exponential and logarithmic maps, there exist terms of the form $\frac{\sinh(x)}{x}$. When $|x| < \epsilon$ for some small $\epsilon$, this is numerically unstable. We special case this (and the derivative) for stability by explicitly deriving the limit value of $x \rightarrow 0$ for these cases.
\end{enumerate}

\subsection{Spherical Space}

\subsubsection{Analytic Derivations}

For spherical space, we work with the sphere $\mathbb{S}^n$. The analytic values are given in Table \ref{tab:sphereops}. Norms and inner products are assumed to be the Euclidean $\ell^2$ values.

\begin{table}[t]
    \centering
    \caption{Formulas for basic operations on the sphere $\mathbb S^n$.}
    \label{tab:sphereops}
    \begin{tabular}{cc}
    \toprule
        Manifold & $\mathbb{S}^n = \{x \in \RR^{n+1} : \norm{x} = 1\}$ \\[7pt]
        Tangent Space & $T_x \mathbb{S}^n = \{v \in \R^{n + 1} : \inn{x, v} = 0 \}$\\[7pt]
        Exponential map &  $\exp_x(y) = \cos(\norm{v}) x + \sin(\norm{v}) \frac{v}{\norm{v}} $ \\[7pt]
        Logarithmic map & $\log_x(y) = \frac{\arccos(\inn{x, y})}{\sin(\arccos(\inn{x, y}))}(y-\inn{x, y} x)$ \\[7pt]
        Parallel transport & $\mrm{PT}_{x\to y}(v) = v - \frac{\inn{y, v}}{1 + \inn{x, y}}(x + y)$ \\[7pt]
        Tangent projection & $\mrm{proj}_x(u) = u - \inn{x, u} x$\\[7pt]
    \bottomrule
    \end{tabular}
\end{table}

Some useful identities are

\begin{enumerate}
    \item \textbf{Mollweide Projection.} To visualize $\mathbb S^2$, we use the Mollweide projection that is used in cartography. For latitude $\theta$ and longitude $\varphi$, the sphere is projected to coordinates $(x,y)$ by (where $\beta$ is a variable solved for by the given equation) \cite{lapaine2011mollweide}:
    \begin{equation}
    2\beta + \sin(2\beta) = \pi \sin(\varphi), \quad x = \frac{2\sqrt{2}}{\pi} \varphi \cos(\beta), \quad y = \sqrt{2} \sin(\beta)
    \end{equation}

    \item \textbf{Log Determinants.} The log determinant for the exponential map of the Sphere is given by
    
    \begin{equation}\label{eqn:sphericalLogdet}
        \log \abs{\det D_v \exp_x} = (n - 1) \sqbrac{\log \sin(\norm{v}) - \log \norm{v}}
    \end{equation}
    
    The log determinant of the derivative of the log map is the negation of the above and the log determinant of the derivative of parallel transport is $0$.
\end{enumerate}

\subsubsection{Numerical Stability}

In order to ensure numerical stability, we note that several functions are inherently numerically unstable

\begin{enumerate}
    \item \textbf{Sine division.} In the exp and log maps, there are values of the form $\frac{\sin x}{x}$. Note that this is numerically instable when $|x| < \epsilon$. We special case this (and its derivative) to allow for better propagation.

    \item \textbf{Log Derivative.} We find that we need an explicit derivation of $D_x \log y$ for our Manifold ODE on the Sphere (see \ref{appendix:nnDesign}). Note that this value can be computed using backpropagation, but we derive it explicitly instead due to numerical instability of the higher order derivatives of some of our functions.
    
    \begin{lemma}\label{lem:dlog}
    For $x, y \in \mc S^n$, and $r= x^T y$, if $|r| \neq 1$, then
    \begin{equation}
        D_y\log_x(y) = \left(\frac{r\arccos(r)}{(1-r^2)^{3/2}} - \frac{1}{1-r^2}\right)\left(y-rx\right)x^T + \frac{\arccos(r)}{\sin(\arccos(r))}\left(I - xx^T \right)
    \end{equation}
    The limit of $D_y\log_x(y)$ as $r = x^T y \to 1$ is $I-xx^T$.
    \end{lemma}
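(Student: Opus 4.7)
The plan is to differentiate the ambient expression $\log_x(y) = g(r)\,(y - rx)$, where $r := x^T y$ and $g(r) := \arccos(r)/\sin(\arccos(r)) = \arccos(r)/\sqrt{1-r^2}$, viewing it as a map $\R^{n+1} \to \R^{n+1}$ and applying the product rule. The two elementary building blocks are $\nabla_y r = x$ and $D_y(y - rx) = I - x x^T$; combining them gives
\begin{equation*}
D_y \log_x(y) \;=\; g'(r)\,(y - rx)\,x^T \;+\; g(r)\,(I - x x^T),
\end{equation*}
which reduces the lemma to computing $g'(r)$.

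Next I would compute $g'(r)$ in closed form from $g(r) = \arccos(r)\,(1-r^2)^{-1/2}$, using $\tfrac{d}{dr}\arccos(r) = -(1-r^2)^{-1/2}$ together with the chain rule on $(1-r^2)^{-1/2}$. A short calculation gives
\begin{equation*}
g'(r) \;=\; \frac{r\,\arccos(r)}{(1-r^2)^{3/2}} \;-\; \frac{1}{1-r^2},
\end{equation*}
and substituting this into the product-rule expression yields the stated formula.

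For the limit as $r \to 1$ (equivalently $y \to x$ on $\mathbb{S}^n$), I would first record the ambient identity $\|y - rx\|^2 = \|y\|^2 - 2 r (x^T y) + r^2 \|x\|^2 = 1 - r^2$, so that $y - rx \to 0$. Setting $t := \arccos(r) \to 0^{+}$ (and thus $\sqrt{1-r^2} = \sin t$) rewrites the first coefficient as $g'(r) = (t \cos t - \sin t)/\sin^3 t$, whose Taylor expansion around $t = 0$ equals $-1/3 + O(t^2)$; hence $g'(r)$ remains bounded while the factor $(y - rx)\,x^T$ vanishes, killing the rank-one term. Simultaneously $g(r) = t/\sin t \to 1$, leaving $I - x x^T$.

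The only mild subtlety, and the one spot where care is required, is verifying that the apparent $(1-r^2)^{-3/2}$ blow-up in $g'(r)$ is tamed by the $\arccos(r) \sim \sin t$ behavior of the numerator (so that $g'(r)$ has a finite limit) and is then further annihilated by $\|y - rx\| \sim \sqrt{1 - r^2} \to 0$. Everything else is direct product- and chain-rule bookkeeping on the ambient formula for $\log_x$ from Table~\ref{tab:sphereops}.
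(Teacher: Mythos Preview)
Your proposal is correct and follows essentially the same approach as the paper: both apply the product rule to $\log_x(y) = g(r)(y - rx)$, compute $D_y(y - rx) = I - xx^T$ and $\nabla_y r = x$, and then evaluate $g'(r)$ via the chain rule. Your limit analysis is in fact slightly more detailed than the paper's (which merely asserts the first coefficient ``can be shown to be finite''), as you explicitly substitute $t = \arccos(r)$ and Taylor-expand to obtain $g'(r) \to -1/3$.
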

    \begin{proof}
        We differentiate the equation of the logarithmic map as given in Table \ref{tab:sphereops}. First, suppose that $\abs{r} \neq 1$. By the product rule we have,
        \begin{align*}
            D_y\log_x(y) & = D_y \left(\frac{\arccos(r)}{\sin(\arccos(r))}\right) (y- r x) + \frac{\arccos(r)}{\sin(\arccos(r))}D_y(y - r x).
        \end{align*}
        The summand on the right is given by
        \begin{align*}
            \frac{\arccos(r)}{\sin(\arccos(r))} (I - xx^T).
        \end{align*}
        To compute the left summand, we use the chain rule and differentiate by $r$
        \begin{align*}
            D_y \left(\frac{\arccos(r)}{\sin(\arccos(r))}\right) (y- r x) & = \parderiv{}{r} \left(\frac{\arccos(r)}{\sin(\arccos(r))}\right) (y- r x) x^T\\
            & = \left(\frac{r\arccos(r)}{(1-r^2)^{3/2}} - \frac{1}{1-r^2}\right) (y- r x) x^T
        \end{align*}
        To check that the limit as $x^T y \to 1$ is $I-xx^T$, it is enough to compute three separate limits that are all finite. It is clear that
        \[\lim_{y \to x} (y-rx)x^T = 0.\]
        Since the limit of the other term in the left summand can be shown to be finite, this means that the left summand is zero in the limit. For the right summand, the only term that depends on $y$ has a limit
        \[\lim_{r \to 1} \frac{\arccos(r)}{\sin(\arccos(r))} = \lim_{r\to 1} \frac{\arccos(r)}{\sqrt{1-r^2}} =  1\]
        where the final equality can be seen by L'Hopital's rule. Thus, the entire limit is $I-xx^T$.
    \end{proof}
\end{enumerate}

\subsection{Backpropagation}

To update the parameters of an MCNF, we need to differentiate $\log p(\mrm{MODE})$ with respect to $\theta$, so we need to differentiate each of the summands in (\ref{eqn:mcnflog}) with respect to $\theta$. 
Differentiating through neural ODE blocks is done with the Euclidean adjoint method \cite{pontryagin1962mathematical,Chen2018NeuralOD,Grathwohl2019FFJORDFC}, which, for a loss $L$ depending on the solution $\mbf y : [t_s, t_e] \to \RR^n$ to a differential equation with dynamics $g(\mbf y(t), t; \theta)$, gives that
\begin{equation}
    \parderiv{L}{\theta} = -\int_{t_e}^{t_s} \parderiv{L}{\mbf y(t)} \parderiv{g(\mbf y(t), t; \theta)}{\theta} \; dt
\end{equation}
Differentiating the dynamics is done with standard backpropagation. The adjoint state $\parderiv{L}{\mbf y(t)}$ is computed by the solution of the adjoint differential equation (\ref{eqn:manifoldAdjoint}) for Euclidean space, with initial condition $\parderiv{L}{\mbf y(t_e)}$. The derivative of the loss $\parderiv{L}{\mbf y(t_e)}$ can be computed directly. For MCNF, $L$ is taken to be the negative log likelihood.

For the hyperbolic and spherical cases, the log determinant terms take simple forms (as in equations \ref{eqn:hyperbolicLogdet} and \ref{eqn:sphericalLogdet}), and are thus easy to differentiate through. Moreover, for the hyperbolic VAE models, we train by maximizing the evidence lower bound (ELBO) on the log likelihood, so that differentiation is done with a reparameterization as in \cite{Nagano2019AWN}.

\subsection{Designing Neural Networks}\label{appendix:nnDesign}

\subsubsection{Construction}

In general we construct the dynamics $f(\mbf{z}(t), t) \in T_{\mbf{z}(t)}\M$ as follows. Suppose $\M$ is embedded in some $\R^d$. Then we construct $f$ to be a neural network with input of dimension $d + 1$. The first $d$ values are the manifold input and the last value is the time. The output of the neural network will be some vector in $\R^d$. To finalize, we project onto the tangent space using the linear projection $\mrm{proj}_{\mbf{z}(t)}$.

\textbf{Hyperbolic Space.} Since hyperbolic space $\mathbb{H}^n$ is diffeomorphic to Euclidean space, we can parameterize all manifold dynamics with a corresponding Euclidean dynamic with an $\exp_{\mu_0}$, where $\mu_0$ is the point $(1, 0, \dots, 0) \in \R^{n + 1}$. In general since we only require one chart, our Manifold ODE consists of $\exp_0 \circ \mrm{ODE} \circ \log_0$, which means that we can model our full dynamics only in the tangent space (not on the manifold). By picking our basis, we can represent elements of $T_0\M$ as element of $\R^n$. For the ODE block, we parameterize using a neural network $f$ which takes in an input of dimension $n + 1$ (which is a tangent space element and time) and outputs an element of dimension $n$.

\textbf{Spherical Space.}

For the spherical case, we use the default construction (with projection), as there is no global diffeomorphism. Note that when passing from the manifold to tangent space dynamics, we require $D_y \log_x$. We also must invoke a radius of injectivity, as opposed to hyperbolic space. This is $\pi$ for all points. 

\subsubsection{Existence of a Solution}

We construct our networks in such a way that the Picard-Lindel\"of theorem holds. Our dynamics are given by $D_z \varphi \circ f$ where $\varphi$ is a chart and $f$ is a neural network. These are well behaved since 1) the neural network dynamics are well behaved using tanh and other Lipchitz nonlinearities and 2) the chart is well behaved since we can bound the domain to be compact. 

\section{Experimental Details}\label{appendix:trainingDetails}

\subsection{Data}\label{appendix:data}

In our code release, we will include functions to generate samples from the target densities that are used for density estimation in section~\ref{subsec:DensityEstimation}.

\textbf{Hyperbolic Density Estimation} We detail the hyperbolic densities in each row of Figure~\ref{fig:hyperbolicDensity}. 

\begin{enumerate}
    \item The hyperbolic density in the first row of Figure~\ref{fig:hyperbolicDensity} is a wrapped normal distribution with mean $(-1, 1)$ and covariance $\frac{3}{4} I$.
    \item The second density is built from a mixture of 5 Euclidean gaussians, with means $(3,0), (-3,0), (0,3), (0,-3),$ and $(0,0)$, and covariance $\frac{1}{2} I$. The resulting hyperbolic density is obtained by viewing $\RR^2$ as the tangent space $\mc T_0 \mc M$, and then projecting the Euclidean density to the Hyperboloid by $\exp_0$.
    \item The third density is a projection onto the hyperboloid of a uniform checkerboard density in $\mc T_0 \mc M$. The square in the second row and third column of the checkerboard has its lower-left corner at the origin $(0,0)$. Each square has side length $1.5$.
    \item The fourth density is a mixture of four wrapped normal distributions. Letting $s=1.3$, $\sigma_1 = .3$ and $\sigma_2 = 1.5$, the wrapped normals are given as: 
    \begin{align*}
    \mc G\left((0, s, s), \mrm{diag}(\sigma_1, \sigma_2)\right) & \qquad  \mc G\left((0, -s, -s), \mrm{diag}(\sigma_1, \sigma_2)\right)\\
    \mc G\left((0, -s, s), \mrm{diag}(\sigma_2, \sigma_1)\right) & \qquad  \mc G\left((0, s, -s), \mrm{diag}(\sigma_2, \sigma_1)\right)
    \end{align*}
\end{enumerate}

\textbf{Spherical Density Estimation} Details are given about the densities that were learned in each row of Figure~\ref{fig:sphereDensity}.

\begin{enumerate}
    \item The density in the first row of Figure~\ref{fig:sphereDensity} is a wrapped normal distribution with mean $\frac{1}{\sqrt{3}}(-1, -1, -1)$ and covariance $\frac{3}{10} I$.
    \item The second density is built from a mixture of 4 wrapped normals, with means $\frac{1}{\sqrt{3}}(1,1,1), \frac{1}{\sqrt{3}}(-1,-1,-1), \frac{1}{\sqrt{3}}(-1,-1,1)$, and $\frac{1}{\sqrt{3}}(1,1,-1)$; all components of the mixture have covariance $\frac{3}{10} I$. 
    \item The third density is a uniform checkerboard density in spherical coordinates $(\varphi, \theta) \in [0,2\pi] \times [0,\pi]$. The rectangle in the second row and third column of the checkerboard has its lower-left corner at $(\pi, \pi/2)$. The side length of each rectangle in the $\varphi$-axis is $\pi/2-0.2$, and the side length in the $\theta$-axis is $\pi/4-0.1$.
\end{enumerate}

\textbf{Variational Inference} For variational inference, we dynamically binarize the MNIST and Omniglot images with the same procedure as given in \cite{Skopek2019MixedcurvatureVA}. We resize the Omniglot images to $28 \times 28$, the same size as the MNIST images.

\subsection{Density Estimation}

In section~\ref{subsec:DensityEstimation} we train on batches of 200 samples from the target density (or batches of size 100 for the discrete spherical normalizing flows \cite{Rezende2020NormalizingFO}). Our MCNF models and the hyperbolic baselines use at most 1{,}000{,}000 samples, with early stopping as needed---the hyperbolic baselines sometimes diverge when training for too many batches. As suggested in \cite{Rezende2020NormalizingFO}, we find that the spherical baseline does indeed needed more samples than this (at least 5{,}000{,}000), so we allow it to train until the density converges. Although we do not investigate sample efficiency in detail, we find that our MCNF is able to achieve better results than the spherical baseline with, frequently, over an order of magnitude fewer samples than the spherical baseline. Note that all methods use the Adam optimizer \cite{Kingma2014adam}. For our MCNF, our dynamics are given by a neural network of hidden dimension 32 and 4 linear layers with tanh activation; for each integration we use a Runge-Kutta 4 solver.

\textbf{Hyperbolic Normalizing Flows} For the hyperbolic discrete normalizing flows, we train with 4 hidden blocks, hidden flow dimension of 32, and tanh activations. The prior distributions used are given in section~\ref{subsec:basedistr} and target distributions are given in section~\ref{appendix:data}.

\textbf{Spherical Normalizing Flows} For the discrete spherical normalizing flows from \cite{Rezende2015VariationalIW}, we use the recursive flow for $\mathbb{S}^2$. In designing this flow, we use the non-compact projection (NCP) flow for $\mathbb{S}^1$ and the autoregressive spline flow from \cite{Durkan2019NeuralSF} for the interval $[-1,1]$. To increase expressiveness for the $\mathbb{S}^1$ flow, we consider learning a convex combination of NCP flows over the circle. Let the number of components in this combination be $n$. To increase the expressiveness of the spline flow over $[-1,1]$, we increase the number of segments. Let the number of segments be $k$. We tuned $n$ and $k$ for each of the $3$ spherical densities to yield the best results. Note that the best $n$ and $k$ frequently ended up being fairly small for the more simple densities, since having an overly expressive model for simple densities ended up being hard to train and produced undesirable artifacts.

The prior distributions used for all target distributions are given in section~\ref{subsec:basedistr}. For the first spherical target distribution in section~\ref{appendix:data}, we use $k=2$ and $n=1$. For the second spherical target distribution we use $k=6$ and $n=2$. For the final spherical target distribution we use $k=32$ and $n=12$.

\subsection{Variational Inference}\label{appendix:detailsVAE}

In each model for variational inference, we train with a hyperbolic or Euclidean VAE. Following \cite{Bose2020LatentVM}, in both cases, the mean and variance encoders are taken to be one layer neural networks with a hidden dimension of 600. The $\mrm{ReLU}$ nonlinearity is used for the hyperbolic VAE and the $\mrm{LeakyReLU}$ is used for the Euclidean VAE. As is often done, we take the weight matrices of the first linear layer to be shared for the mean and variance \cite{Bose2020LatentVM,Kingma2014AutoEncodingVB}. The decoder is taken to have a symmetric architecture, with the input coming from the latent space and the output being a decoded image. We vary the latent dimension from 2 to 6 in our experiments. When we add a discrete normalizing flow, we use 2 hidden blocks and 2 hidden dimensions per block with a hidden layer of size 128 and tanh activations, again following \cite{Bose2020LatentVM}. For continuous flows, we replace this with a neural ODE or MCNF block where the dynamics are parameterized by a two-layer network of hidden size 128 with tanh activations. For numerical integration we use the Runge-Kutta 4 solver.

\section{Dynamic Chart Method}\label{appendix:dynamicChart}

Here we elaborate on the dynamic chart method presented in Section \ref{sec:trivialization} of our paper. Specifically, we discuss the significance of dynamic charts and the generality of the multi-chart MCNF (which allows learning of arbitrary densities on manifolds with conjugate points, like $\mathbb{S}^2$).

\subsection{Choosing Charts}

\textbf{The Benefit of Dynamic Charts.} Note that our dynamic chart trivialization is performed in the main paper simply by splitting the time interval $[t_s, t_e]$ up uniformly into segments of length $\epsilon$ and learning the solution locally via $\exp$-map charts at the anchor points (endpoints of the segments). Such a splitting allows for the \textit{ball of injectivity} (induced by the radius of injectivity) to ``move" throughout the training process, so that it always surrounds the locally relevant region (centered at the anchor point). This approach allows for transportation of mass that evades the conjugate point problem (which \cite{Gemici2016NormalizingFO} does not allow for).

This becomes especially clear if we consider the case of conjugate points on a sphere, i.e. the case of antipodal points. Consider a task in which we have a prior with mass surrounding one antipodal point and the target density is concentrated around the other point. A one chart approach would have trouble transporting the mass due to the fact that a fixed $\exp$-map can never have a ball of injectivity (induced by the radius of injectivity $\pi$, in this case) that includes both points throughout training. However, our dynamic approach allows this ball of injectivity to shift throughout the training process and makes correct transportation of mass for such a scenario possible. An experiment testifying to this is given in Appendix \ref{sec:multchartexample} (and Figure \ref{fig:dynamicChartStability}).

\textbf{Non-uniform Time-domain Segmenting.} While our approach allows for the ball of injectivity to shift, the dynamic chart method is not limited to a uniform $\epsilon$-segment splitting of the $[t_s, t_e]$ interval. Similar benefits may be derived with an alternative splitting. However, it is not clear what additional benefits a non-uniform splitting might bring without prior knowledge of local manifold topology. Our experiments find that a uniform split works well for the densities we tested.

\textbf{Control of Local Dynamics} Note that although the dynamic chart approach makes it possible for the ball of injectivity to move (making it possible to learn general densities), local dynamics may still cause issues. This is because the local dynamics of the ODE may cause the solver to venture to the edges of the ball of injectivity surrounding the current anchor point $x$, thereby causing instability. One may resolve this issue by enforcing a Lipschitz constraint on the solution by explicitly bounding the derivative (e.g. bounding $\frac{d\mathbf{y}(t)}{dt}$). If we call the Lipschitz constant $L \in \mathbb{R}$, the length of chart domain $\epsilon > 0$, and the radius of injectivity $r_x$ at the current anchor point $x$, we would want to enforce the constraint such that $\epsilon L < r_x$, i.e. $L < r_x/\epsilon$. Notice that this is only a local Lipschitz constraint, since it depends on the radius of injectivity at each anchor point, and moreover, on the way the time-domain segmenting is done. Enforcing this constraint, even in a simplistic way (e.g. thresholding $\frac{d\mathbf{y}(t)}{dt}$ at $r_x/\epsilon$), ensures that the dynamics are configured so the local solution stays within the ball of injectivity and instability is avoided. We note that for most of our experiments, we did not worry about this, as our method was already stable, but we include this clarification in the case that it becomes necessary to maintain stability (for e.g. a particularly complicated density on a manifold with conjugate points).

\subsection{Expressivity and Generality of MCNF}
\label{sec:multchartexample}

\begin{figure}[t]
    \centering
    \subfloat[Target]{\includegraphics[width=.4\textwidth]{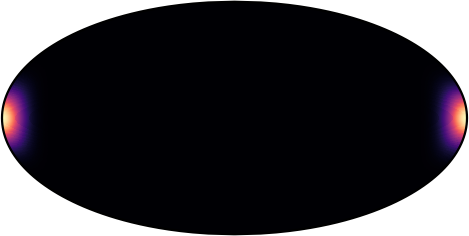}}\\
    \begin{tabular}{cc}
    \subfloat[1 chart]{\includegraphics[width=.4\textwidth]{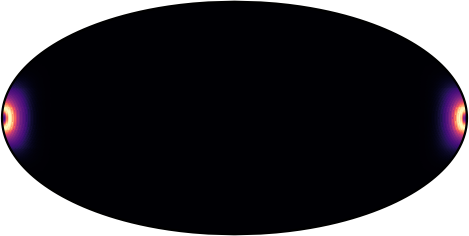}} &
    \subfloat[16 charts]{\includegraphics[width=.4\textwidth]{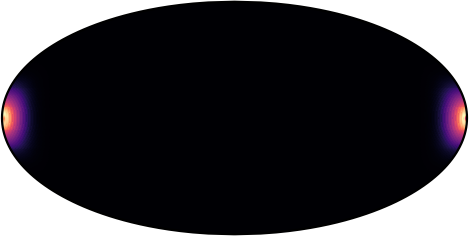}}
    \end{tabular}
    \caption{Comparison of MCNF for different numbers of charts. Note that using just 1 chart is not enough to learn the density concentrated at the antipode, while the 16 chart model learns the density well.}
    \label{fig:dynamicChartStability}
\end{figure}

To demonstrate the effectiveness of our dynamic chart method in getting around conjugate points and numerical instability around them, we set up a particular density on the sphere for estimation. The target density is a $\mrm{vMF}((1,0,0), 30)$ distribution, which is heavily concentrated around $(1,0,0)$, as shown in Figure \ref{fig:dynamicChartStability} (a). We still take our base distribution as the vMF centered at $\mu_0 = (-1,0,0)$ (see Figure \ref{fig:basedistr}), but now we take it to be more concentrated by setting $\kappa = 3$. To learn the target density using this base distribution, our MCNF must learn to move probability density from samples around $(1,0,0)$ to areas of high base probability density around the antipodal point $(-1,0,0)$.

As shown in Figure~\ref{fig:dynamicChartStability}, MCNF with just one chart is not able to learn a density with high probability at $(1,0,0)$, as it is unable to move samples around $(1,0,0)$ close enough to the antipodal point. On the other hand, MCNF with 16 charts is able to do so, thus validating our model's ability to get around conjugate points with the dynamic chart method.

\section{Generated MNIST Samples}\label{appendix:samples}

Here, we generate samples from MNIST using our trained hyperbolic MCNF. To do this, we use an MCNF with the same setup as in \ref{appendix:detailsVAE}, except with a slightly larger VAE architecture. We add a linear layer to both the mean and variance networks, add an additional shared linear layer for both of them, and add a linear layer to the decoder. MNIST samples generated with our approach are given in Figure \ref{fig:mnistsamples}. With a latent space of dimension 2, the MCNF generates examples that resemble real digits. Interpolating between points in the latent space gives hybrid intermediaries that meaningfully represent semantic change (for instance, going between a ``4" and a ``7" produces instances of ``9").

\begin{figure}[h]
    \centering
    \subfloat[]{\includegraphics[width=.45\textwidth]{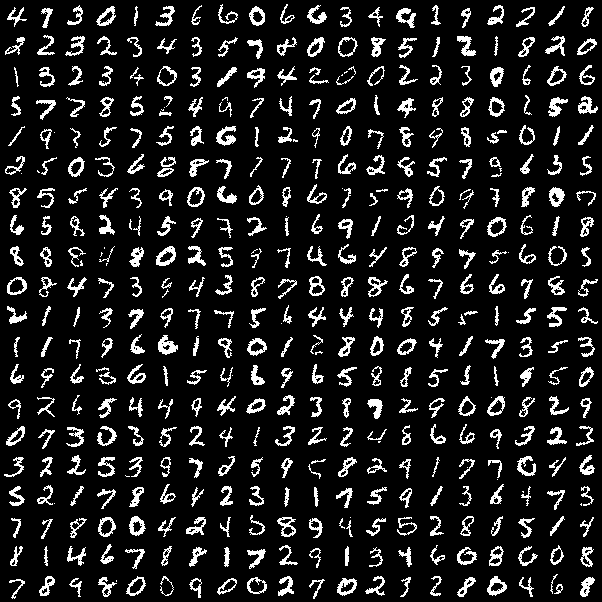}}\\
    \begin{tabular}{cc}
    \subfloat[]{\includegraphics[width=.45\textwidth]{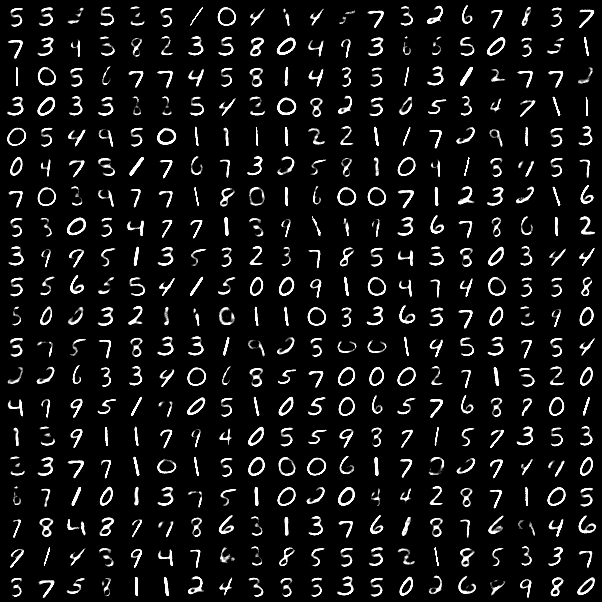}} &
    \subfloat[]{\includegraphics[width=.45\textwidth]{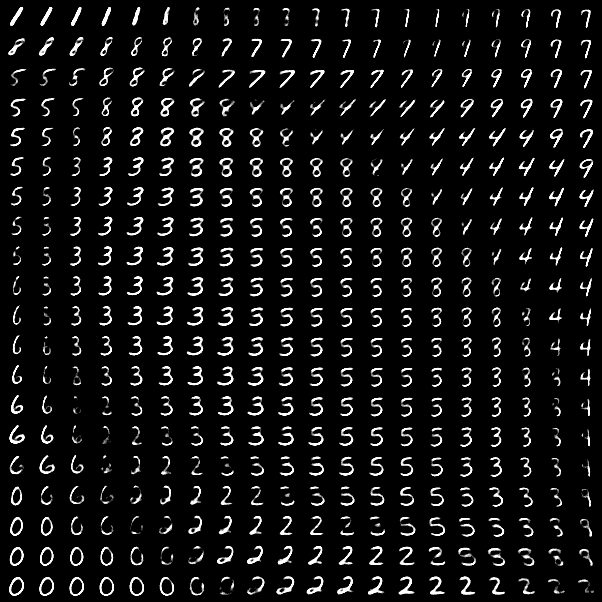}} 
    \end{tabular}
    \caption{(a) Real sample images from (Binarized) MNIST. (b) Random generated samples from a hyperbolic MCNF trained on Binarized MNIST with latent dimension 2. (c) Generated samples from the same trained MCNF, where the latent variables are taken from the projection onto $\mathbb H^2$ of a uniformly spaced grid on the tangent space $\mc T_0 \mathbb H^2$. The generated samples are visualized in their corresponding positions on $\mc T_0 \mathbb H^2$.}
    \label{fig:mnistsamples}
\end{figure}

\end{document}